\newif\ifcomments
\newcommand{\N}{\mathbb{N}}
\newcommand{\naturals}{\mathbb{N}}
\newcommand{\st}{\;.\;}
\newcommand{\set}[1]{\left\{#1\right\}}
\newcommand{\given}{\;|\;}
\DeclareMathOperator*{\argmin}{arg\,min}
\newcommand{\eval}[2]{\underset{{#1}}{\mathbb{E}}\left[#2\right]}
\newcommand{\prob}[2]{\underset{{#1}}{\Pr}\left(#2\right)}
\newcommand{\supp}{\text{supp}}
\newcommand{\VC}{\mathsf{VC}}
\newcommand{\Lit}{\mathsf{Lit}}
\newcommand{\R}{\mathsf{R}}
\newcommand{\mar}{\text{mar}}
\newcommand{\erm}{\mathsf{ERM}}
\newcommand{\rerm}{\mathsf{RERM}}
\newcommand{\twohalt}{\mathsf{TwoHalt}}
\newcommand{\zjn}{z_j^{0}}
\newcommand{\zip}{z_i^{1}}
\newcommand{\zin}{z_i^{0}}
\newcommand{\zset}[1][m]{\left\{(\zin,\zip)\right\}_{i=1}^{#1}}
\newcommand{\zdef}[1][m]{Z=\zset[#1]}
\newcommand{\cdim}{c\text{-}\dim_\U}
\newcommand{\cvc}{c\text{-}\VC}
\newcommand{\A}{\mathcal{A}}
\renewcommand{\H}{\mathcal{H}}
\renewcommand{\P}{\mathcal{P}}
\newcommand{\U}{\mathcal{U}}
\newcommand{\X}{\mathcal{X}}
\newcommand{\Y}{\mathcal{Y}}
\newcommand{\indct}[1]{\mathbbm{1}\left[{#1}\right]}
\newtheorem{theorem}{Theorem}
\newtheorem{proposition}[theorem]{Proposition}
\newtheorem{lemma}[theorem]{Lemma}
\newtheorem{corollary}[theorem]{Corollary}
\newtheorem{example}[theorem]{Example}
\newtheorem{definition}[theorem]{Definition}
\newtheorem{remark}[theorem]{Remark}
\newtheorem{fact}[theorem]{Fact}
\title{On the Computability of Robust PAC Learning}
\author{Pascale Gourdeau\thanks{Vector Institute \& University of Toronto, Toronto, ON, Canada; \texttt{pascale.gourdeau@vectorinstitute.ai}} ,  Tosca Lechner\thanks{Cheriton School of Computer Science, University of Waterloo, Waterloo, ON, Canada; \texttt{tlechner@uwaterloo.ca}} ,  and Ruth Urner\thanks{Lassonde School of Engineering, EECS Department, York University, Toronto, ON, Canada; \texttt{ruth@eecs.yorku.ca}} \thanks{Alphabetical order.}}
\date{\vspace{-5ex}}
\begin{document}

\maketitle

\begin{abstract}%
We initiate the study of computability requirements for adversarially robust learning. Adversarially robust PAC-type learnability is by now an established field of research.
However, the effects of computability requirements in PAC-type frameworks are only just starting to emerge. 
We introduce the problem of robust computable PAC (robust CPAC) learning and provide some simple sufficient conditions for this. 
We then show that learnability in this setup is not implied by the combination of its components: classes that are both CPAC and robustly PAC learnable are not necessarily robustly CPAC learnable. 
Furthermore, we show that the novel framework exhibits some surprising effects: for robust CPAC learnability it is not required that the robust loss is computably evaluable!
Towards understanding characterizing properties, we introduce a novel dimension, the computable robust shattering dimension. 
We prove that its finiteness is necessary, but not sufficient for robust CPAC learnability. 
This might yield novel insights for the corresponding phenomenon in the context of robust PAC learnability, where insufficiency of the robust shattering dimension for learnability has been conjectured, but so far a resolution has remained elusive.
\end{abstract}


\section{Introduction}

Formal studies of learnability mostly fall into one of two extremes: the focus is either on purely statistical (or information-theoretic) aspects, where predictors and learners are treated as functions; or requirements of computational efficiency, namely runtime that is polynomial in various parameters, are imposed. Recent work has introduced the study of learnability under a more basic, yet arguably essential requirement, namely that both learners and predictors are computable functions \citep{agarwal2020learnability}, a framework termed Computable Probably Approximately Correct (CPAC) learnability. For binary classification, a setting where various versions of standard learnability (realizable, agnostic, proper, improper, learnable by any ERM) are well known to be characterized (in the information-theoretic sense) by finiteness of the VC dimension, the addition of computability requirements has revealed a somewhat more fine-grained landscape:  CPAC learnability has been shown to be equivalent to a computable version of the VC dimension, the so called \emph{effective VC dimension}, while proper CPAC learnability was proven to be equivalent to the finiteness of the VC dimension combined with the existence of a computable \emph{approximate} ERM learner \citep{sterkenburg2022characterizations, delle2023find}.

In this work, we initiate the study of computable PAC learning in the (adversarially) robust setting. 
Adversarial robust PAC learning has by now been extensively studied, e.g., by \citet{montasser2019vc, montasser2021adversarially, gourdeau2021hardness, awasthiMM023, lechner2023adversarially}
and recent work has provided a characterization of learnability (again, in the purely information-theoretic sense) by a parameter of the one inclusion graph of the learning problem \citep{montasser2022adversarially}. We here explore how adding a requirement of computability adds more subtle aspects to the questions of robust PAC learnability, and expose some perhaps surprising aspects of this setup.

We start by setting up a formal framework for robust learning under computability constraints on the learners and hypotheses, and also the perturbation sets employed. As a warm-up, we provide some simple sufficient conditions for learnability in our setting, but also show that the question of robust CPAC learnability is more subtle than the combination of its components: we prove that there exist classes that are CPAC learnable and robustly PAC learnable with respect to perturbation types that are decidable, yet are not robustly CPAC learnable. 

We then explore the role of computability of the robust loss for robust CPAC learnability. 
Perhaps surprisingly, the robust loss being computably evaluable is neither a necessary nor a sufficient requirement for robust learnability.
The insufficiency result relies on showing that there are CPAC learnable classes with computably evaluable robust loss that do not admit a robust empirical risk minimization (RERM) oracle \citep{montasser2019vc}  which would return a hypothesis with minimal empirical robust risk on any sample) in the robust agnostic setting.
This result is significant as many works assume access to an RERM oracle. In particular, the standard reduction of agnostic to realizable learning in the robust learning setting requires such an oracle \citep{montasser2019vc}, and we here demonstrate an example where learnability in the robust realizable case does not extend to the agnostic case through this reduction, since RERM is not computable. 

Finally, we explore the role of dimensions for computable robust learnability. We introduce a novel, computable version of the robust shattering dimension. Finiteness of  the robust shattering dimension has been shown to to be necessary for robust learnability \citep{montasser2019vc}, however it has remained an open question whether it is also a sufficient condition \citep{montasser2022adversarially}. 
For the computable version, we prove that the finiteness of our \emph{computable robust shattering dimension} is necessary, yet not sufficient, to ensure robust CPAC learnability. This result might provide novel insights for the case without computability constraints. Our results involve deriving a new no free lunch theorem for robust learning, which might also be of independent interest.

\subsection{Related Work}

\paragraph{Computable Learning.} 
Incorporating aspects of computability into formal learning frameworks is a very recent field of study, originally motivated by the establishement of problems whose learnability is independent of set-theory \citep{ben2017learning}. The introduction of the notion of computable PAC (CPAC) learning \citep{agarwal2020learnability} provided a framework for standard binary classification. Follow up works have by now resulted in a full characterization of CPAC learnability  in terms of the so-called effective VC dimension \citep{sterkenburg2022characterizations, delle2023find}. Very recent works have extended the study of computability in learning to other learning settings such as continuous domains \citep{ackerman2022computable} and online learning \citep{hasrati2023computable}.

\paragraph{Robust Learning.} There is a rich learning theory literature on adversarial robustness. Earlier work focused on the existence of adversarial examples for the stability notion of robustness \citep{fawzi2016robustness,fawzi2018adversarial,fawzi2018analysis,gilmer2018adversarial,shafahi2018adversarial,tsipras2019robustness}, and for its true-label counterpart \citep{diochnos2018adversarial,mahloujifar2019curse}. Many of the works in the former category highlighted the incompatibility between robustness and accuracy, while \citet{BhattacharjeeC21} and \citet{chowdhury2022robustness} later argued that these two objectives should not be in conflict by proposing notions of adaptive robusteness.
Some studies focus on the sample complexity of robust learning in a PAC setting through a notion called the adversarial VC dimension \citep{cullina2018pac}. Some upper bounds depend on the number of perturbations allowed for each instance \citep{attias2022improved}, others on the VC and dual VC dimension of a hypothesis class obtained through an improper learner \citep{montasser2019vc}, thus also covering the case of infinite perturbation sets. 
\citet{ashtiani2020black} later gave an upper bound for \emph{proper} robust learning in terms of the VC dimensions of the class and induced margin class. 
\citet{montasser2022adversarially} exhibited a characterization of robust learning based on the one-inclusion graph of \citet{haussler1994predicting} adapted to robust learning.
The sample complexity of robust learning has also been studied in the semi-supervised setting \citep{attias2022characterization}, through the lens of transformation invariances \citep{shao2022theory}, through Rademacher complexity bounds \citep{khim2019adversarial,yin2019rademacher,awasthi2020adversarial} and by the use of online learning algorithms \citep{diakonikolas2020complexity,bhattacharjee2021sample}.
Robust risk relaxations have been studied by  \citet{viallard2021pac,ashtiani2023adversarially,bhattacharjee2023robust,raman2023proper}, while \citet{balcan2022robustly,balcan2024reliable} offered reliability guarantees for robust learning under various notions of robustness.
In terms of the true-label robust risk, sample complexity upper and lower bounds with access to random examples only have been derived with respect to distributional assumptions \citep{diochnos2020lower,gourdeau2021hardness,gourdeau2022sample}, as this set-up does not allow distribution-free robust learning \citep{gourdeau2021hardness}.
To circumvent computational or information-theoretic obstacles of robust learning with random examples only, recent work has studied robust learning with the help of oracles  \citep{montasser2021adversarially,gourdeau2022local,lechner2023adversarially}, while other work has instead looked at curtailing the (computational) power of the adversary \citep{mahloujifar2019can,garg2020adversarially}.  
Prior works have also analyzed the role of computational efficiency in robust learnability \citep{bubeck2019adversarial, degwekar2019computational, gourdeau2021hardness}, however we are the first to exhibit effects of issues arising from the basic, essential
requirement of computability.

\section{Problem Set-up}
\label{sec:setup}
\paragraph{Notation.} Denote by $\Sigma$ a finite alphabet, with  $\Sigma^*$ denoting the set of all finite words, or strings, over $\Sigma$. We use standard notation for sets and functions, for example, we let $[n] = \{1,2,\ldots, n\}\subset \N$ denote the set of the first $n$ natural numbers.

\paragraph{Computability.} 
Throughout this paper, we will assume we fixed a programming language. 
The existence of a program or algorithm is then equivalent to the existence of a Turing machine.
We use $(T_i)_{i\in \N}$ to denote a fixed enumeration of all Turing machines/programs. We use the notation $T \equiv S$ to indicate that two Turing Machines have the same behavior as functions (not necessarily the same encoding, thus not necessarily the same index in the enumeration). Further, for a Turing machine $T$ and input $x$, we write $T(x)\downarrow$ to indicate that $T$ halts on input $x$ and we write $T(x)\uparrow$ to indicate that $T$ loops (does not halt) on input $x$.

We say that a function $f:\Sigma^*\rightarrow\Sigma^*$ is \emph{total computable} if there exists a program $P$ that halts on every string $\sigma\in\Sigma^*$ with $P(\sigma)=f(\sigma)$.
A set $S\subseteq \Sigma^*$ is called \emph{decidable} (or recursive) if there exists a program $P$ such that for every $\sigma\in\Sigma^*$, $P$ halts on $\sigma$ and outputs whether $\sigma\in S$. 
Finally, a set $S\subseteq \Sigma^*$ is called \emph{recursively enumerable} (or semidecidable) if there exists a program $P$ that enumerates all strings in $S$, or, equivalently, if there exists a program $P$ that halts on every input $\sigma\in S$ and, if it halts, correctly indicates whether $\sigma\in S$.

Moreover, we will fix a language and proof system for first-order logic that is both sound and complete: a first-order formula has a proof if and only if it is a tautology.
We additionally require that this language has a rich enough vocabulary (with respect to function and relation symbols) to guarantee that the set of all its tautologies is undecidable (e.g. \citet{MendelsonIntroToLogic}, Proposition 3.54 (Church's Theorem)). 

\paragraph{Learnability.} Let $\X$ be the input space and $\Y$ the label space. 
We will focus on the case $\Y=\{0,1\}$, i.e., binary classification, and countable domains, thus $\X=\N$.
We let $\H\subseteq \{0,1\}^\N$ denote a \emph{hypothesis class} on $\X$.
Given a joint distribution $D$ on $\X\times\Y$ and a hypothesis $h\in\H$, the \emph{risk} (or error) of $h$ with respect to $D$ is defined as 
$$\R(h;D)=\prob{(x,y)\sim D}{h(x)\neq y}\enspace.$$
We will also denote by $\ell:\H\times\X\times\{0,1\}\rightarrow\{0,1\}$ the 0-1 loss function $\ell(h,x,y)=\mathbf{1}[h(x)\neq y]$,  
and we use the notation $\R(h;S)= \frac{1}{m}\sum_{i=1}^m\ell(h,x_i,y_i)$ to denote the \emph{empirical risk} of $h$ on a sample $S = \{(x_i, y_i)\}_{i=1}^m \in (\X\times \Y)^m$.
We operate in the PAC-learning framework of \citep{valiant1984theory}.

\begin{definition}[Agnostic PAC Learnability] 
    A hypothesis class $\H$ is \emph{PAC learnable in the agnostic setting} if there exists a learner $\A$ and function $m(\cdot,\cdot)$ such that for all $\epsilon,\delta\in(0,1)$ and  for any distribution $D$, if the input to $\A$ is an i.i.d. sample $S$ from $D$ of size at least $m(\epsilon,\delta)$, then, with probability at least $(1-\delta)$ over the samples, the learner outputs a hypothesis $\A(S)$ with 
    $\R(\A(S);D)\leq \underset{h\in\H}{\inf}\R(h;D)+\epsilon\enspace.$
    The class is said to be \emph{PAC learnable in the realizable setting} if the above holds under the condition that $\underset{h\in\H}{\inf}\R(h;D)=0$.
\end{definition}

It is well known that a (binary) hypothesis class is PAC learnable if and only if it has finite \emph{VC dimension} \citep{vapnik1971uniform}, which is defined below. 

\begin{definition}[Shattering and VC dimension \citep{vapnik1971uniform}]
Given a class of functions $\H$ from $\X$ to $\{0,1\}$, we say that a set $S\subseteq\X$ is \emph{shattered by $\H$} if the restriction of $\H$ to $S$ is the set of all function from $S$ to $\{0,1\}$.
The VC dimension of a hypothesis class $\mathcal{H}$, denoted $\VC(\mathcal{H})$, is the size $d$ of the largest set that can be shattered by $\mathcal{H}$.
If no such $d$ exists then $\VC(\mathcal{H})=\infty$.
\end{definition}

\paragraph{Computable learnability.}
The above notion of learnability only considers the size of a sample needed to ensure generalization; there are no computational limitations. 
\emph{Efficient} PAC learnability \citep{valiant1984theory} requires that the algorithm $\A$ run in time that is polynomial in the learning parameters, and that its output be polynomially evaluable. 
Sitting in the middle of these two viewpoints is \emph{computable} PAC (CPAC) learnability  \citep{agarwal2020learnability}, where we do not require computational efficiency, but rather that the learning algorithm and its output be computable. 
We additionally require that the hypothesis class $\H$ be computably representable (\citet{agarwal2020learnability}, Remark 4). 

\begin{definition}[Computable Representation of a Hypothesis class \citep{agarwal2020learnability}]
    A class of functions $\H$ is \emph{decidably representable (DR)} if there exists a decidable set of programs $\P$ such that the set of all functions computed by a program in $\P$ equals $\H$. We call it \emph{recursively enumerably} representable (RER) if there exists such a set of programs that is recursively enumerable.
\end{definition}

The following definition incorporates these requirements for the class that a learner outputs.

\begin{definition}[CPAC Learnability, \citep{agarwal2020learnability}]
     We say that a class $\H$ is (agnostic) CPAC learnable, if there is a computable (agnostic) PAC learner for $\H$ that outputs  total computable functions as predictors and uses a decidable (recursively enumerable) representation for these.
\end{definition}

\paragraph{Adversarial robustness.}
Let $\U:\X\rightarrow 2^\X$ be a \emph{perturbation type}, assigning each point $x\in\X$ a region $\U(x)$ accessible to an adversary, with the convention $x\in\U(x)$.
We define the robust risk as:
$$\R_\U(h;D)=\prob{(x,y)\sim D}{\exists z \in \U(x) \st h(z)\neq y}\enspace.$$

Similarly as in the standard setting, we will consider the robust loss function $\ell^\U:\H\times\X\times\{0,1\}\rightarrow\{0,1\}$ defined as $\ell^\U(h,x,y)=\mathbf{1}[\exists z\in\U(x) \st h(x)\neq y]$, and denote the empirical robust risk of predictor $h$ over a sample $S$ by $\R_\U(h;S)$.

We will here work with this \emph{stability} notion of robustness, while prior work has also explored other notions of robustness with emphasis on label correctness \citep{mahloujifar2019curse,gourdeau2021hardness, BhattacharjeeC21, chowdhury2022robustness}. 

\begin{definition}[Agnostic Robust PAC Learnability \citep{montasser2019vc}] 
    A hypothesis class $\H$ is $\U$-robustly PAC learnable in the agnostic setting if there exists a learner $\A$ and a function $m(\cdot,\cdot)$ such that for all $\epsilon,\delta \in (0,1)$ and for every distribution $D$, if the input to $\A$ is an i.i.d. sample $S$ from $D$ of size at least $m(\epsilon,\delta)$, then, with probability at least $(1-\delta)$ over the samples, the learner outputs a hypothesis $\A(S)$ with 
    $\R_\U(\A(S);D)\leq \underset{h\in\H}{\inf}\R_\U(h;D)+\epsilon\enspace.$
    The class is said to be $\U$-robustly PAC learnable in the realizable setting if the above holds under the condition that $\underset{h\in\H}{\inf}\R_\U(h;D)=0$.
\end{definition}

\paragraph{Computable robust learnability.}
We can straightforwardly adapt the definition of CPAC learnability to its robust counterpart:

\begin{definition}[Robust CPAC Learnability]
     We say that a class $\H$ is $\U$-robustly (agnostic) CPAC learnable, if there exists a $\U$-robust PAC learner for $\H$ that also satisfies the computability requirements of a CPAC learner.
\end{definition}

Similarly to the requirements for $\H$,
we can require the perturbation type $\U$ also be DR or RER:

\begin{definition}[Representations of Perturbation Types]
\label{def:pert-dr-rer}
    Let $\U:\X\rightarrow 2^\X$ be a perturbation type. 
    Then $\U$ is said to be \emph{decidably representable (or recursively enumerable)}  if the set $\{(x,z) \st x\in \X, z\in \U(x) \}$ is decidable (or recursicely enumerable, repectively). 
\end{definition}

Asking for perturbation types to be decidably representable is quite a natural requirement. 
 Indeed, we show in Appendix~\ref{app:example-u-dr} an example that uses a perturbation type $\U$ that is not DR and thus makes the impossibility of $\U$-robust CPAC learning trivial.

As opposed to the binary loss, which depends only on a simple comparison between two labels, and can thus always be evaluated for computable predictors, the robust loss is not always computably evaluable. 
In Section \ref{sec:oracle}, we explore the role of the following notion of loss computability.
\begin{definition}[Robust loss computably evaluable]
Given a hypothesis class $\H$, and perturbation type $\U$, we say that the robust loss for   $\U$ is \emph{computably evaluable on $\H$} if $\ell^\U:\H\times\X\times\{0,1\} \to \{0,1\}$ is a total computable function.
\end{definition}

\paragraph{Proper learnability and empirical risk minimization.} For all the above notions of learnability, we call the class $\H$ \emph{properly PAC/ CPAC/ $\U$-robustly PAC/ $\U$-robustly CPAC learnable} if there exists a learner that satisfies the corresponding definition of learnability and always outputs functions from~$\H$. 
A learner $\A$ is said to \emph{perform empirical risk minimization (ERM)}, or \emph{implement an $\erm_\H$ oracle}, 
if it always outputs a predictor from the hypothesis class of minimal empirical risk, that is for all $m\in \N$ and samples $S\in (\X\times \Y)^m$, 
\[
\A(S) \in\argmin_{h\in \H} \R(h , S)\enspace.
\]
Robust empirical risk minimization (RERM) is defined analogously for the robust risk $\R_\U$.
We now outline different types of RERM oracles that we will study throughout this work. 
A labelled sample $S$ is said to be \emph{robustly realizable} if $\min_{h\in \H}\R_{\U}(h;S)=0$, and \emph{robustly non-realizable} otherwise. 
We will distinguish between three cases:

   $\bullet$ {\bf weak-realizable $\rerm_{\H}^{\U}$ oracle:} for any robustly realizable sample $S$, the oracle outputs $h$ with $\R_{\U}(h;S)=0$. On robustly non-realizable samples such an oracle is allowed not to halt.
        
    $\bullet$ {\bf strong-realizable $\rerm_{\H}^{\U}$ oracle:} for any robustly realizable sample $S$, the oracle outputs $h$ with $\R_{\U}(h;S)=0$. For any robustly non-realizable sample, it halts and outputs ``not-robustly-realizable". 
    
    $\bullet$ {\bf agnostic $\rerm_{\H}^{\U}$ oracle:} the oracle performs RERM on both robustly realizable and robustly non-realizable samples.

    Note that, if the robust loss is computably evaluable, then the existence of a computable strong realizable $\rerm_{\H}^{\U}$ oracle is equivalent to the existence of an agnostic $\rerm_{\H}^{\U}$ oracle. Moreover, a strong-realizable oracle always implies the existence of an agnostic oracle.
    
\paragraph{Useful facts.} Recall the following sufficient condition for CPAC learnability, which we later use:

\begin{fact}[\cite{agarwal2020learnability, sterkenburg2022characterizations}]
\label{fact:ste22-vc-erm-computable-cpac}
    If $\VC(\H)$ is finite and 
    there exists a total computable function that implements an $\erm_\H$ oracle, then $\H$ is CPAC learnable.
\end{fact}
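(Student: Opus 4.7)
The plan is to combine the classical VC theorem for finite-VC classes with the assumed computability of the $\erm_\H$ oracle, and then verify the representation requirement from the definition of CPAC learnability. First, I would invoke the fundamental theorem of statistical learning: since $\VC(\H)$ is finite, there is a sample complexity $m(\epsilon,\delta) = O\!\left((\VC(\H) + \log(1/\delta))/\epsilon^2\right)$ such that for every distribution $D$, with probability at least $1-\delta$ over an i.i.d.\ sample $S$ of size $m(\epsilon/2,\delta)$, the inequality $|\R(h;S) - \R(h;D)| \leq \epsilon/2$ holds uniformly for all $h \in \H$.

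Next, I would use the ERM property to derive the PAC guarantee in the standard way. Letting $\A$ be the given total computable implementation of $\erm_\H$ and $h^* \in \argmin_{h\in\H} \R(h;D)$, the chain
\[
\R(\A(S);D) \;\leq\; \R(\A(S);S) + \tfrac{\epsilon}{2} \;\leq\; \R(h^*;S) + \tfrac{\epsilon}{2} \;\leq\; \R(h^*;D) + \epsilon
\]
holds with probability at least $1-\delta$: the middle step uses the ERM property of $\A(S)$, and the outer two use uniform convergence. The realizable case is subsumed by setting $\R(h^*;D)=0$.

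Finally, I would verify the computability requirements from the CPAC definition. The learner $\A$ is total computable by assumption, and its outputs are programs for members of $\H$, which are total functions $\N \to \{0,1\}$; hence each output is a total computable predictor. For the representation of the output class, I would use $\A$ itself as the enumerator: since $\X \times \Y = \N \times \{0,1\}$ is decidable, the finite samples in $(\X\times\Y)^*$ can be enumerated, and applying $\A$ to each yields a recursively enumerable set of programs that contains every predictor the learner can output, supplying the required RER representation. I expect the only subtle check to be ensuring that the ERM outputs are descriptions of total (not merely partial) programs, which follows from the hypothesis that $\A$ is total and has codomain $\H$, whose members are total by definition; no additional construction seems needed.
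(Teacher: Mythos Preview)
The paper does not supply its own proof of this statement: it is recorded as a \emph{Fact} with citations to \citet{agarwal2020learnability} and \citet{sterkenburg2022characterizations} and is thereafter invoked without argument (for instance in the proof of Fact~\ref{fact:suff-rob-cpac} and in Example~\ref{ex:u-dr-necessary}). So there is nothing in the paper to compare your argument against.

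Taken on its own, your proposal is correct. The two substantive pieces---uniform convergence from finite VC dimension, and the chain $\R(\A(S);D)\leq \R(\A(S);S)+\epsilon/2 \leq \R(h^*;S)+\epsilon/2 \leq \R(h^*;D)+\epsilon$---are exactly the standard argument that any ERM is a successful agnostic PAC learner for a finite-VC class. Your final paragraph handles the CPAC-specific requirements appropriately: the learner is total computable by hypothesis, and enumerating $\A$ over all finite samples in $(\N\times\{0,1\})^*$ produces a recursively enumerable set of programs representing the learner's output class, which is what the definition of CPAC learnability asks for. The only assumption you are silently leaning on is that ``a total computable function that implements an $\erm_\H$ oracle'' outputs \emph{programs} (rather than abstract functions) for its hypotheses, so that totality of each output predictor follows; this is the intended reading in this paper's framework, so no gap arises.
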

We now recall the following result by \citet{ashtiani2020black}, which outlines sufficient conditions for the proper robust learnability of a class. 
Throughout this text, we will use the notation of \citet{ashtiani2020black}, where the \emph{margin class} $\H^\U_{\text{mar}}:=\set{\mar^\U_h}_{h\in\H}$ consists of the margin sets $\mar^\U_h:=\set{x\in\X\given \exists z\in\U(x)\st h(x)\neq h(z)}$, i.e., given $\H$ and $\U$, the instances in $\X$ that incur a robust loss of 1 due to a perturbation that is labeled differently by $h$ (rather than due to mislabeling). 

\begin{fact}[Theorem 7 in \citep{ashtiani2020black}]
\label{fact:apu20-proper-rob-learning}
If both the VC dimension of a class $\H$ and the VC dimension of  $\H^\U_{\text{mar}}$ are finite, then $\H$ is properly (agnostically) $\U$-robustly PAC learnable.
\end{fact}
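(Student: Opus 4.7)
The plan is to reduce proper agnostic $\U$-robust learnability of $\H$ to standard VC-based uniform convergence on a suitable composite loss class. The key observation is the pointwise identity
\[
\ell^\U(h,x,y) \;=\; \max\bigl(\ell(h,x,y),\; \mathbf{1}[x\in\mar^\U_h]\bigr),
\]
which follows by cases: if $h(x)\neq y$ then $z=x$ already witnesses robust loss $1$, while if $h(x)=y$ then a witness exists iff some $z\in\U(x)$ satisfies $h(z)\neq h(x)$, i.e., iff $x\in\mar^\U_h$. In other words, the robust $0/1$ loss of $h$ on $(x,y)$ is the logical OR of its standard $0/1$ loss and the indicator that $x$ lies in the margin set $\mar^\U_h$.

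I would then define the robust loss class $\mathcal{L}^\U_\H = \{(x,y)\mapsto \ell^\U(h,x,y) : h\in \H\}$. By the identity above, each element of $\mathcal{L}^\U_\H$ is the pointwise disjunction of a function from the standard loss class of $\H$ (of VC dimension at most a constant times $\VC(\H)$) and the indicator of a set in $\H^\U_{\text{mar}}$ (trivially pulled back to $\X\times\Y$ by ignoring $y$). The standard combinatorial lemma bounding the VC dimension of pointwise Boolean combinations of two finite-VC classes with dimensions $d_1, d_2$ by $O((d_1+d_2)\log(d_1+d_2))$ (via Sauer--Shelah and counting) then shows that $\mathcal{L}^\U_\H$ has finite VC dimension. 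Classical VC uniform convergence therefore applies: there is a sample-complexity function $m(\epsilon,\delta)$, polynomial in $1/\epsilon$, $\log(1/\delta)$ and $\VC(\mathcal{L}^\U_\H)$, such that for any $D$ over $\X\times\Y$ and $S\sim D^m$ with $m\geq m(\epsilon,\delta)$, with probability at least $1-\delta$,
\[
\sup_{h\in\H}\,\bigl|\R_\U(h;S) - \R_\U(h;D)\bigr| \;\leq\; \epsilon/2.
\]

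The proper learner then outputs $\A(S)\in \argmin_{h\in\H}\R_\U(h;S)$, which is attained because the empirical robust risk takes only finitely many values on any finite $S$. The standard three-line argument combining the above uniform-convergence bound with the optimality of $\A(S)$ on $S$ yields $\R_\U(\A(S);D)\leq \inf_{h\in\H}\R_\U(h;D) + \epsilon$, establishing agnostic (and hence also realizable) proper $\U$-robust PAC learnability. The only non-routine step is the VC bound on the composite class $\mathcal{L}^\U_\H$, which amounts to a product shatter-function counting argument; the remainder is classical VC theory applied to the robust loss.
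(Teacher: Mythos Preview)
The paper does not prove this statement; it is quoted as Theorem~7 of \citet{ashtiani2020black} and used as a black box (the only place the paper invokes it quantitatively is in the proof of Fact~\ref{fact:suff-rob-cpac}, where it extracts the sample-complexity bound $O((D\log D + \log(1/\delta))/\epsilon^2)$ with $D=\VC(\H)+\VC(\H^\U_\mar)$). So there is no in-paper proof to compare against.

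Your argument is correct and is in fact the approach of \citet{ashtiani2020black}: the pointwise identity $\ell^\U(h,x,y)=\ell(h,x,y)\vee \mathbf{1}[x\in\mar^\U_h]$ is exactly their decomposition, and bounding the VC dimension of the OR of the standard loss class and the margin class via Sauer--Shelah then yields uniform convergence for the robust loss, from which proper agnostic robust learnability by RERM follows. Your justification that the empirical $\argmin$ is attained (discrete range on a finite sample) is also fine, and since the statement concerns robust PAC (not CPAC) learnability, no computability of the RERM step is required. The resulting sample-complexity bound matches the $O((D\log D)/\epsilon^2)$ shape the paper later uses.
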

 
\section{Warm-up: Simple Sufficient Conditions for Robust CPAC Learnability}
\label{sec:warm-up}

We start by exhibiting conditions ensuring (proper) robust CPAC learnability in the agnostic setting.

\begin{fact}
\label{fact:suff-rob-cpac}
    Let hypothesis class $\H$ and perturbation type $\U$ be such that $\VC(\H)+\VC(\H_\mar^\U)<\infty$ and 
    there exists a total computable function that implements an agnostic $\rerm_{\H}^{\U}$ oracle.
    Then $\H$ is properly $\U$-robustly CPAC learnable.
    If there exists a computable function that implements a weak-realizable $\rerm_{\H}^{\U}$ oracle, then $\H$ is $\U$-robustly CPAC learnable in the realizable case.
\end{fact}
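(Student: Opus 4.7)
The plan is to reduce both claims to a routine combination of two ingredients already in hand: the statistical proper robust PAC learnability guarantee from Fact~\ref{fact:apu20-proper-rob-learning}, and the computable agnostic (resp.\ weak-realizable) RERM oracle given by hypothesis. The idea is that the RERM oracle simultaneously serves two purposes: (i)~it is the learning rule that we use on the sample, which is already known from Fact~\ref{fact:apu20-proper-rob-learning} to generalize, and (ii)~being a total computable function that outputs programs for predictors in $\H$, its image is recursively enumerable and therefore directly supplies an RER representation of the outputs, as required by the definition of CPAC learnability.

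For the agnostic case, I would first invoke Fact~\ref{fact:apu20-proper-rob-learning} to obtain a sample complexity function $m(\epsilon,\delta)$ such that, with probability at least $1-\delta$ over an i.i.d.\ sample $S$ of size $m(\epsilon,\delta)$, \emph{any} minimizer of the empirical robust risk over $\H$ has true robust risk within $\epsilon$ of $\inf_{h\in\H}\R_\U(h;D)$. This step is purely information-theoretic and uses finiteness of $\VC(\H)$ and $\VC(\H_\mar^\U)$ via uniform convergence of the robust loss on $\H$. Next, I would define the learner $\A(S)$ to be the output of the assumed total computable agnostic $\rerm_\H^\U$ oracle on $S$. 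By construction $\A(S)\in\argmin_{h\in\H}\R_\U(h;S)$, so the generalization guarantee of Fact~\ref{fact:apu20-proper-rob-learning} applies, giving the PAC bound. Computability of $\A$ is immediate, and the outputs are total computable functions in $\H$, making $\A$ proper. For the representation requirement, I would observe that the set of programs ever returned by $\A$ is the image of a total computable function over the enumerable set of finite samples and is therefore recursively enumerable; this RE set of programs, all computing functions in $\H$, witnesses the RER representation needed for CPAC learnability.

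For the realizable case, the only difference is that we have only a weak-realizable $\rerm_\H^\U$ oracle, which is not required to halt on robustly non-realizable samples. Here I would use that in the robustly realizable setting there is some $h^*\in\H$ with $\R_\U(h^*;D)=0$, which forces $\R_\U(h^*;S)=0$ almost surely for any i.i.d.\ sample $S$. Hence $S$ is robustly realizable with probability one, the weak-realizable oracle halts, and its output $\A(S)$ satisfies $\R_\U(\A(S);S)=0$; the generalization argument of Fact~\ref{fact:apu20-proper-rob-learning} then yields the realizable PAC guarantee. The representation and properness arguments carry over unchanged.

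I do not anticipate a real technical obstacle here, since the statement is essentially a ``lifting'' of Fact~\ref{fact:apu20-proper-rob-learning} from the purely statistical to the computable setting via the RERM oracle. The only point requiring a little care is justifying that the RER representation needed by the definition of CPAC learnability can be taken to be the image of the computable RERM, rather than a separately assumed representation of all of $\H$; this, however, is immediate once one notes that the CPAC definition only demands a recursively enumerable representation of the functions the learner actually outputs, and that the image of any total computable function on an enumerable domain is recursively enumerable.
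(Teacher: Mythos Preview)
Your proposal is correct and follows essentially the same approach as the paper: invoke Fact~\ref{fact:apu20-proper-rob-learning} for the statistical generalization guarantee of any robust empirical risk minimizer, then plug in the assumed computable $\rerm_\H^\U$ oracle as the learner. The paper's proof is a two-sentence version of exactly this, even citing the explicit sample bound $O\!\left(\frac{D\log D + \log(1/\delta)}{\epsilon^2}\right)$ with $D = \VC(\H)+\VC(\H_\mar^\U)$; your additional discussion of the RER representation of the learner's outputs is more careful than the paper, which silently relies on its standing assumption that $\H$ itself is computably representable.
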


\begin{proof}
    By Fact~\ref{fact:apu20-proper-rob-learning}, $D:=\VC(\H)+\VC(\H_\mar^\U)<\infty$ implies that a finite sample of size $O\left(\frac{D\log D+\log 1/\delta}{\epsilon^2}\right)$ is sufficient to guarantee robust generalization w.r.t. $\U$, whenever a robust risk minimizer for a given sample is returned.
    Since robust ERM is computably implementable, we are done.
\end{proof}

\begin{fact}
    \label{fact:losseval+RER->realRERM}
    Let hypothesis class $\H$ be RER and perturbation type $\U$ be such that $\ell^{\U}$ is computably evaluable on $\H$.
    Then $\H$ admits a weak realizable RERM oracle.
\end{fact}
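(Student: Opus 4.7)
The plan is to construct the weak realizable RERM oracle by a straightforward enumerate-and-check procedure, exploiting both assumptions in tandem. Since $\H$ is RER, fix a program that recursively enumerates a set of programs $P_1, P_2, \ldots$ such that the functions they compute are exactly the hypotheses in $\H$. Given an input sample $S = \{(x_i, y_i)\}_{i=1}^m$, the oracle proceeds in stages indexed by $k = 1, 2, \ldots$: at stage $k$, it obtains the $k$-th program $P_k$ from the enumeration (call the computed hypothesis $h_k$), then computes $\ell^{\U}(h_k, x_i, y_i)$ for each $i \in [m]$, which is possible since $\ell^{\U}$ is assumed computably evaluable on $\H$. If all $m$ of these loss values equal $0$, the oracle halts and returns $h_k$ (i.e., the program $P_k$); otherwise it moves on to stage $k+1$.

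For correctness, first suppose $S$ is robustly realizable, so there exists $h^* \in \H$ with $\R_\U(h^*; S) = 0$. By the RER property, $h^*$ is computed by some $P_{k^*}$ in the enumeration. The procedure will halt at or before stage $k^*$ (it may halt earlier by encountering a different zero-risk hypothesis, which is equally acceptable), and the output $h_k$ satisfies $\R_\U(h_k; S) = 0$ by construction. If instead $S$ is not robustly realizable, then no enumerated hypothesis can pass the check, and the procedure loops forever. This is permitted behavior for a weak realizable $\rerm_{\H}^{\U}$ oracle, which is only required to produce correct output on robustly realizable samples.

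The only mild subtlety is ensuring that the computable evaluation of $\ell^{\U}$ is uniform in the hypothesis argument: we need a single algorithm that, given a program representing $h \in \H$ and a labeled point $(x,y)$, outputs $\ell^{\U}(h,x,y)$. This is exactly what the definition of ``computably evaluable on $\H$'' provides, since $\ell^{\U}$ is asserted to be a total computable function on $\H \times \X \times \{0,1\}$ (with elements of $\H$ represented by their programs). No other obstacle arises; the argument is essentially a direct unbounded search, and it is delicate only in that the hypothesis check must use the single uniform evaluator rather than an $h$-dependent one.
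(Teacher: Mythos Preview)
The proposal is correct and follows essentially the same approach as the paper: enumerate $\H$ via the RER assumption, compute the robust loss on the sample for each enumerated hypothesis using computable evaluability of $\ell^{\U}$, and return the first one with zero empirical robust risk. Your version is slightly more detailed (explicitly handling the non-realizable case and noting the need for uniformity of the loss evaluator), but the argument is identical in substance.
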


\begin{proof} 
   Let $S$ be a robustly realizable input sample. The following procedure computes $\rerm_{\H}^{\U}$. 
   Since the class is RER, we can iterate through the class $\H = (h_i)_{i\in\N}$ and for each $h_i$:\\
   $\bullet$ Compute $\R^{\U}(h_i,S)$, which is possible since the robust loss is computably evaluable;\\
   $\bullet$ Check whether $\R^{\U}(h_i,S)=0$. If so, halt and output $h_i$, otherwise continue.\\
   As we know that there exists $h \in \H$ with $R^{\U}(h,S)=0$, we know that the algorithm will halt.
\end{proof}

\begin{remark}
\label{rmk:suff-rob-cpac-realizable}
    {Combining Facts~\ref{fact:suff-rob-cpac} and~\ref{fact:losseval+RER->realRERM} shows that for a RER hypothesis class $\H$ and a perturbation type $\U$, the conditions $\VC(\H)+\VC(\H_\mar^\U)<\infty$ and $\ell^{\U}$ on $\H$ being computably evaluable are  sufficient to guarantee proper robust CPAC learnability in the robustly realizable case.}
\end{remark}

Now, we show that having access to a computable online learner (see Definition~15 in \citep{hasrati2023computable}) and a counterexample oracle, together with the margin class having finite VC dimension, is sufficient to guarantee robust learnability.
The counterexample oracle we will use is the Perfect Attack Oracle (PAO) of \citep{montasser2021adversarially}, which takes as input $h,x,y\in\H\times\X\times\{0,1\}$ and returns $z\in\U(x)$ such that $h(z)\neq y$, if such a counterexample exists.
The proof is included in Appendix~\ref{app:rob-cpac-algo}.

\begin{proposition}
\label{thm:cpac-from-c-online+finite-rob-loss+pao}
    Let $\H$ be computably online learnable, and let $\U$ be such that $\VC(\H_\mar^\U)<\infty$. 
    Then $\H$ is $\U$-robustly CPAC learnable with access to the PAO in the realizable setting.
\end{proposition}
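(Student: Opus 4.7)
The plan is to convert the computable online learner for $\H$ into a batch robust learner by using the PAO to simulate an online sequence of ``robust mistakes,'' drive the empirical robust loss to zero, and then appeal to uniform convergence. Crucially, since $\H$ is online learnable, its Littlestone dimension $M$ is finite and hence $\VC(\H)\le M<\infty$; combined with the hypothesis $\VC(\H_\mar^\U)<\infty$, Fact~\ref{fact:apu20-proper-rob-learning} supplies a finite robust sample complexity, so the only real work is to realize this bound with a computable procedure.

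Given a sample $S=\{(x_i,y_i)\}_{i=1}^m$, I would initialize the computable online learner $\A_O$ with its initial hypothesis $h_0$ and then sweep repeatedly through $S$: at each index $i$, invoke the PAO on $(h_t,x_i,y_i)$; if a perturbation $z\in\U(x_i)$ with $h_t(z)\neq y_i$ is returned, feed $(z,y_i)$ to $\A_O$ as an online example (producing an updated $h_{t+1}$) and restart the sweep, otherwise move on. The algorithm halts and returns $h_t$ as soon as one full pass through $S$ produces no attack. For correctness and termination, in the realizable setting there is $h^*\in\H$ with $\R_\U(h^*;D)=0$, so with probability one the drawn $S$ is entirely robustly realized by $h^*$; every example $(z,y_i)$ fed to $\A_O$ is then consistent with $h^*$, the realizable mistake bound of $\A_O$ is at most $M$, and the procedure halts after at most $M$ updates with $\R_\U(h_t;S)=0$. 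Choosing $m=O((\VC(\H)+\VC(\H_\mar^\U)+\log(1/\delta))/\epsilon)$ and applying uniform convergence to the (binary) robust loss class of $\H$, whose VC dimension is controlled by Fact~\ref{fact:apu20-proper-rob-learning}, yields $\R_\U(h_t;D)\le\epsilon$ with probability at least $1-\delta$. Computability of the pipeline is immediate: $\A_O$ is computable by assumption, the PAO is available by hypothesis, the sweep is a finite loop bounded by $M\cdot m$ many PAO calls, and the output representation is inherited from $\A_O$.

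The main obstacle I anticipate is handling the case where the computable online learner is improper, so that the output $h_t$ lies in some larger class $\mathcal{G}\supseteq \H$ rather than in $\H$ itself. In that case the uniform convergence step must be lifted to the robust loss class of $\mathcal{G}$, which is not controlled by $\VC(\H_\mar^\U)$ directly. If a proper computable online learner is available (e.g., a computable variant of the standard optimal algorithm exploiting $M<\infty$), the difficulty disappears; otherwise I would fall back on a sample compression argument, using the at most $M$ ``attack'' points produced by the PAO during the sweep as a compression set and deriving a realizable-case compression generalization bound, which bypasses the need to control the capacity of $\mathcal{G}$ globally.
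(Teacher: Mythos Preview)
Your proposal is correct and essentially identical to the paper's proof: both invoke $\VC(\H)\le\Lit(\H)<\infty$, run the same PAO-driven sweep through $S$ feeding counterexamples to the computable online learner (the paper calls this the $\mathsf{CycleRobust}$ algorithm of \citet{montasser2021adversarially}), terminate after at most $\Lit(\H)$ updates with zero empirical robust loss, and then appeal to the finite-$\VC(\H)+\VC(\H_\mar^\U)$ generalization bound (the paper via Fact~\ref{fact:suff-rob-cpac}, you via Fact~\ref{fact:apu20-proper-rob-learning} directly). Your discussion of the improper-online-learner obstacle and the compression fallback is a reasonable refinement that the paper does not address; the paper simply invokes Fact~\ref{fact:suff-rob-cpac} as if the output were in $\H$.
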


\section{Relating CPAC and Robust CPAC Learning}
\label{sec:relating-cpac-rob-cpac}

In this section, we study the relationship between CPAC learnability, robust CPAC learnability and the computable evaluability of the robust loss.
We start in Section~\ref{sec:impossibility-results} with examples where we have CPAC learnability, but not robust CPAC learnability. The constructions in that section also show that the robust loss being computably evaluable is not sufficient for robust CPAC learnability.
In Section~\ref{sec:oracle}, we show that the robust loss being computably evaluable is in general also not necessary to ensure robust CPAC learning, which might be unexpected.

\subsection{Robust CPAC learnability is not implied by Robust PAC + CPAC learnability}
\label{sec:impossibility-results}

In this section, we first look at two examples of hypothesis classes and perturbation types where CPAC learnability and robust learnability do not imply robust CPAC learnability.
In both cases, the perturbation types are DR, in contrast to Example~\ref{ex:u-dr-necessary}.
Our first result holds in the robust agnostic case for general, improper learners, and the second, in the robust realizable case for proper learners.

\begin{theorem}
\label{thm:rob-cpac-imposs-agnostic}
    There exists a hypothesis class $\H$ and perturbation type $\U$ such that (i) $\H$ is properly CPAC learnable, (ii) $\U$ is decidably representable and (iii) $\H$ is properly $\U$-robustly PAC learnable, but $\H$ is not (improperly) $\U$-robustly CPAC learnable in the agnostic setting. %
\end{theorem}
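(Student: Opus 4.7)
The plan is to construct an explicit pair $(\H, \U)$ on $\X = \N$ that encodes the halting problem into the agnostic robust loss while keeping standard CPAC learnability, DR-ness of $\U$, and robust PAC learnability intact. I would identify $\X$ with $\N \times \N$ via a computable pairing, designating two-point gadgets $G_i = \{(i,0),(i,1)\}$ for each Turing machine $T_i$, and take $\U$ with $\U((i,0))=G_i$ and $\U(z)=\{z\}$ otherwise, which is trivially DR. The hypothesis class would be $\{h_\emptyset\} \cup \{g_i : T_i \text{ halts}\}$ (with $h_\emptyset \equiv 0$ and $g_i = \mathbf{1}[\cdot \in G_i]$), represented via programs of the form ``simulate $T_i$ for $t$ steps and return $g_i$ if it halts, else return $h_\emptyset$''; this is a decidable set of programs whose image is exactly $\H$, and each $g_i$ has support on the two points of a single gadget.

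Given this construction, the positive properties follow by standard arguments: $\VC(\H) \le 2$ and the margin class $\H^\U_\mar$ consists of empty sets (each $g_i$ is constant on its gadget $G_i$, so there is no perturbation-induced disagreement), so Fact~\ref{fact:apu20-proper-rob-learning} gives proper robust PAC learnability, and an approximate ERM built from the DR representation together with the effectively known VC bound yields proper CPAC learnability via Fact~\ref{fact:ste22-vc-erm-computable-cpac}. For the negative result, I would argue by contradiction. Assuming a computable improper agnostic robust learner $\A$ exists with sample complexity $m(\cdot,\cdot)$, I would construct, for each $k \in \N$, a finitely-supported computably sampleable distribution $D_k$ whose structure forces $\inf_{h\in\H}\R_\U(h, D_k)$ to differ by a constant gap $\gamma > 0$ between the halting and non-halting cases for $T_k$. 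Since $\A$'s output is a total computable function, $\U$ is DR, and $D_k$ has finite support, $\R_\U(\A(S), D_k)$ is computable; running $\A$ with $\epsilon = \gamma/3$, $\delta = 1/3$ and amplifying confidence by majority vote over $O(1)$ repetitions would yield a deterministic halting decider, a contradiction.

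The main obstacle is constructing $D_k$ so that the halting/non-halting gap is actually \emph{diagnostic} against improper learners. Because an improper learner may output any total computable hypothesis --- including the function $g_k$, which as a function is computable regardless of whether $T_k$ halts --- a naive $D_k$ (e.g., a point mass on $((k,0),1)$) is defeated by the learner simply returning $g_k$ to achieve zero robust risk in both scenarios, trivially beating OPT in both cases. To block this hedging, the construction should place carefully balanced mass on both positive and negative examples on $G_k$, and possibly on an auxiliary distractor gadget, so that achieving low robust risk forces a \emph{structurally different} labeling of $G_k$ in the halting versus non-halting case, and no single deterministic hypothesis can satisfy $\R_\U(\cdot, D_k) \le \mathrm{OPT} + \gamma/3$ in both scenarios simultaneously. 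Formalising this gap, while preserving the DR-ness of the representation and the finiteness (and effectiveness) of both the VC and margin-VC dimensions, is the core technical challenge; if the above single-gadget construction proves too restrictive, a richer RER construction indexed by more elaborate halting witnesses --- so that multiple halting-dependent hypotheses contribute to OPT and no trivial improper output can dominate them all --- would be the natural next step.
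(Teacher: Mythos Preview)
Your construction has a structural flaw that prevents it from ever yielding the improper impossibility. Because every $h\in\H$ is constant on each gadget $G_i$, you have $\mar_h^\U=\emptyset$ for all $h\in\H$, and more generally for your $\U$ every perturbation set is finite and explicitly listable. Hence the robust loss $\ell^\U(\hat h,x,y)$ is computable for \emph{every} total computable $\hat h$, and an improper learner is free to output $g_k$ (or, more generally, a hypothesis that labels each observed gadget by its empirical majority label) regardless of whether $T_k$ halts. Such a learner achieves robust risk at most the Bayes-optimal robust risk, which is always $\le \inf_{h\in\H}\R_\U(h;D)$, so the agnostic guarantee is met without ever consulting membership in $\H$. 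No choice of $D_k$---balanced mass, distractor gadgets, or otherwise---can force a gap that an improper learner must resolve, because the only undecidable ingredient in your setup is \emph{membership in $\H$}, which an improper learner simply ignores. (There is also a secondary problem: your $\H$ is not properly agnostically CPAC learnable. On the sample $((k,0),1)^M$ with $T_k$ halting, any proper learner must output a program computing $g_k$; harvesting the learner's output over all such samples yields a computable upper bound on halting times, contradicting undecidability. So property~(i) already fails for this $\H$.)

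The paper avoids both issues by placing the undecidability not in $\H$ but in the \emph{geometry of $\U$}: $\H$ is a trivially CPAC-learnable class (thresholds on evens, constant on odds), while $\U$ is designed so that whether two specific perturbation sets $\U(6i)$ and $\U(6i+4)$ intersect is undecidable, even though each set individually is decidable (so $\U$ is DR). A carefully weighted three-point distribution $D_i$ on $\{6i,6i+2,6i+4\}$ then has the property that \emph{any} hypothesis with robust risk within $1/6$ of optimal---proper or not---must label the single point $6i+2$ as $1$ if and only if the sets intersect. This is what pins down improper learners: the diagnostic is a pointwise evaluation of the learner's output, not a question about $\H$. Your proposal never identifies such a forced-label point, and with your $\U$ none exists.
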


\begin{proof}[Sketch]
    Fix a proof system for ﬁrst-order logic over a rich enough vocabulary that is sound and complete.
    Let $\set{\varphi_i}_{i\in\N}$ and $\set{\pi_j}_{j\in\N}$ be enumerations of all theorems and proofs, respectively.
    We define the following hypothesis class on $\N$:
    $$\H=\set{h_{a} \; : \; \forall i\in \N,\; h_{a}(2i)= \mathbf{1} [i\leq a] \wedge h_{a}(2i+1)= 1}_{a \in \N \cup\set{\infty}}\enspace.$$
    Thus, $\H$ is the the concept class where each function defines a threshold on even integers, and is the constant function 1 on odd integers. 
    
    The perturbation sets are defined as follows. For any $i\in\N$ we set:
    \begin{align*}
        &\U(6i) = \set{6i} \cup \set{2j+1}_{j\in\N} \enspace, \\
        &\U(6i+2) = \set{6i+2}  \enspace,\\
        &\U(6i+4) = \set{6i+2, 6i+4} \cup \set{2j+1\given \pi_j\text{ is a proof of theorem }\varphi_i}_{j\in\N} \enspace,\\
        &\U(2i+1) = \set{2i+1}  \enspace.
    \end{align*}
    The properties (i)-(iii) are fulfilled by this construction, as we will show in detail in the appendix.
    Furthermore, we note that the question of whether $\U(6i) \cap \U(6i+4)$ is empty is equivalent to whether there exists a proof for theorem $\varphi_i$, and thus is undecidable.
    We now show that this undecidable problem can be reduced to agnostically robustly CPAC learning $\H$ with respect to $\U$. To see this define the distributions $D_i$ on $\naturals \times \{0,1\}$ as follows:
    \begin{align*}
        D_i((6i,1)) = 1/2, \qquad
        D_i((6i+2,1)) = 1/6,\qquad
        D_i((6i+4,0)) = 1/3\enspace.
    \end{align*}
For a learner to succeed on all the $D_i$, it needs to decide  whether $\U(6i) \cap \U(6i +4) = \emptyset$ for all $i$.
\end{proof}

For more detail and the full proof we refer the reader to Appendix~\ref{app:proof-rob-cpac-imposs-agnostic}.

The result above holds in the improper agnostic setting. 
Next, we look at the robustly realizable setting, and show an impossibility result in case we require a \emph{proper} robust learning algorithm.

\begin{theorem}
\label{thm:proper-rob-cpac-imposs-realizable}
    There exists a hypothesis class $\H$ and perturbation type $\U$ such that (i) $\H$ is properly CPAC learnable, (ii) $\U$ is decidably representable, and (iii) $\H$ is properly $\U$-robustly PAC learnable, but not properly $\U$-robustly CPAC learnable in the realizable setting.
\end{theorem}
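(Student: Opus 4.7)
The plan is to adapt the construction of Theorem~\ref{thm:rob-cpac-imposs-agnostic} to the realizable proper setting. The broad skeleton carries over---enumerations $\{\varphi_i\}$ of first-order tautologies and $\{\pi_j\}$ of candidate proofs, a domain partitioned into ``blocks'' indexed by $i$---but two features are new. Because we are in the realizable setting, every distribution in the hard family must be robustly realizable by some element of $\H$; and because we are in the proper setting, the output must lie in $\H$ and the identity of the realizer must itself be uncomputable to recover from samples.

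Concretely, I would take $\H = \H_0 \cup \{g_i : i \in \N\}$, where $\H_0$ is a variant of the threshold-on-evens, constant-$1$-on-odds class from Theorem~\ref{thm:rob-cpac-imposs-agnostic} and each $g_i$ is a computable ``witness'' hypothesis tailored to index $i$ (using the decidable proof-checking relation to define the values of $g_i$ on the proof-encoding odd integers of block $i$). The perturbation type $\U$ is again defined via proof-checking, so membership in each $\U(x)$ is decidable. The design goal is that a distribution $D_i$ supported on a small finite set of points in the $i$-th block is robustly realized by a unique element of $\H$, and that this element is a baseline $h_{a(i)} \in \H_0$ exactly when $\varphi_i$ is not provable, and is the witness $g_i$ exactly when $\varphi_i$ is provable (so realizability always holds, but the type of the realizer is determined by provability). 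Verification of (i)--(iii) then follows the template of Theorem~\ref{thm:rob-cpac-imposs-agnostic}: finite $\VC(\H)$ and finite $\VC(\H_\mar^\U)$ follow from a block-wise analysis since each $g_i$ differs from the baseline class only inside its own block and only at the proof-encoding odd integers; proper CPAC learnability follows from Fact~\ref{fact:ste22-vc-erm-computable-cpac} via enumeration-based $\erm_\H$ (the standard $0$--$1$ loss being trivially evaluable); and proper $\U$-robust PAC learnability follows from Fact~\ref{fact:apu20-proper-rob-learning}.

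For the impossibility I would reduce first-order provability to proper robust CPAC learning in the realizable case. Given input $i$, enumerate the finitely many sample multisets $S$ of size $m(\epsilon,\delta)$ supported on $\supp(D_i)$, run the hypothetical proper robust CPAC learner $\A$ on each, and classify the output $\A(S)$ according to its type (baseline vs.\ witness); the $D_i^m$-weighted average of the type indicator is a computable rational, and by the PAC guarantee of $\A$ on the robustly realizable $D_i$, taking $\delta<1/3$ forces this average to exceed $2/3$ or fall below $1/3$ according to whether $\varphi_i$ is provable. This would give a total computable decision procedure for the set of provable first-order formulas, contradicting its undecidability. The main obstacle I anticipate is the simultaneous calibration of three competing demands: (a) every $D_i$ must be robustly realizable by some element of $\H$ for both possible provability statuses of $\varphi_i$; (b) adjoining the witnesses $\{g_i\}$ must not blow up $\VC(\H_\mar^\U)$; and (c) the unique realizer of $D_i$ must cleanly encode the provability status in a computably-checkable way. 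Achieving all three likely requires a richer block structure and a finer perturbation definition than in Theorem~\ref{thm:rob-cpac-imposs-agnostic}, with auxiliary ``anchor'' points in each block that force the baseline and the witness to be mutually exclusive realizers of $D_i$ depending on whether $\U(x) \cap \U(x')$ contains any proof-encoding point of $\varphi_i$.
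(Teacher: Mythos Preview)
Your proposal is a plan rather than a proof: you never actually specify the witness hypotheses $g_i$, the block structure, or the perturbation type, and you explicitly flag the ``main obstacle'' of simultaneously achieving (a)--(c) as unresolved. There is also a concrete gap in the reduction step that you do not acknowledge. You propose to ``classify the output $\A(S)$ according to its type (baseline vs.\ witness)'' and rely on ``the unique realizer of $D_i$'' encoding the provability status. But a proper PAC learner with accuracy $\epsilon$ is not obliged to return the unique zero-robust-risk hypothesis; it may return any $h\in\H$ with $\R_\U(h;D_i)\leq\epsilon$. For your reduction to go through you need the stronger property that \emph{every} near-optimal $h\in\H$ carries a computably extractable bit that determines whether $\varphi_i$ is provable, uniformly in $i$. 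With a class of the form $\H_0\cup\{g_i\}$ this is delicate: once the witnesses are present, a learner that always outputs $g_i$ (having read off $i$ from the sample support) must be forced to fail when $\varphi_i$ is not provable, and symmetrically a learner that always outputs the best baseline must be forced to fail when $\varphi_i$ is provable---and you must arrange both failures while keeping $\VC(\H_\mar^\U)$ finite. None of this is carried out.

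The paper takes a quite different route that sidesteps all of (a)--(c) at once. The class is $\H=\{h_{a,b,c}:a,b\in\N,\,c\in\{0,1\}\}$ where $h_{a,b,c}$ sends the two even numbers $2a,2b$ to $1$ and is the constant $c$ on odd numbers; the perturbation type is $\U(2i)=\{2i\}\cup\{2j+1:T_i(i)\text{ halts within }j\text{ steps}\}$ and $\U(2i+1)=\{2i+1\}$. One checks $\VC(\H)=3$ and $\VC(\H_\mar^\U)=5$, so (i)--(iii) follow from Facts~\ref{fact:ste22-vc-erm-computable-cpac} and~\ref{fact:apu20-proper-rob-learning}. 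The hard family is $D_{i,k}$ putting mass $\tfrac12$ on each of $(2i,1)$ and $(2k,0)$; this is robustly realizable iff at most one of $T_i(i),T_k(k)$ halts, and in that case \emph{every} $h\in\H$ of robust risk below $\tfrac12$ is forced to take $c=1$ on the odd numbers if $T_i(i)$ halts and $c=0$ if $T_k(k)$ halts. The extracted bit is therefore not ``which subclass was output'' but simply the value $h(1)$---a single evaluation, automatically computable, and uniformly forced across all near-optimal outputs. Correspondingly, the undecidable target is not first-order provability but a new problem $\twohalt$ (on input $(i,k)$: halt and name which of $T_i(i),T_k(k)$ halts when exactly one does, and still halt when neither does), which the paper shows undecidable via a two-fold recursion-theorem argument. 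No auxiliary witness hypotheses are needed, and the margin-class analysis stays elementary.
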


\begin{proof}
Let $(T_i)_{i\in \naturals}$ be an enumeration of Turing Machines. 
   We define the hypothesis class on $\N$:
    $$\H = \{h_{a,b,c} \;:\; \forall i \; (h_{a,b,c}(2i) = 1 \text{ iff } i \in\{a,b\}) \wedge (h_{a,b,c}(2i+1) = c)\}_{a,b\in \N, \;c\in \{0,1\}}\enspace,$$  
    i.e., $\H$ is the class  of functions that only maps at most two even numbers to 1 and are constant on the odd numbers.
    First note that $\VC(\H)=3$, as for any shattered set $X\subseteq \N$, we can have at most one odd integer in $X$, and at most two even ones. It is straightforward to show that any set of the form $\set{2i,2j,2k+1}$ for $i,j,k\in\N$ can be shattered. 
    This implies, by Theorem 6 in \citep{montasser2019vc}, that $\H$ is (improperly) robustly learnable for any perturbation type $\U$.
    We will argue below that, for the perturbation type we employ, $\H$ is in fact properly $\U$-robustly CPAC learnable.
    Second, as ERM is easily implementable for $\H$, we have that $\H$ is properly CPAC learnable. 
    
    We define the perturbation sets as follows:
    \begin{align*}
        &\U(2i) = \{2i\}\cup \{2j+1: T_i(i) \text{ halts within } j \text { steps } \} \enspace, \\
        &\U(2i+1) = \set{2i+1}  \enspace.
    \end{align*}

    Note that $\U$ is decidably representable, as for any two points $x_1,x_2$, the program outlined in Appendix~\ref{app:dr-algo-proper} decides whether $x_2 \in \U(x_1)$.

    We now show that for this $\U$, $\H$ is \emph{properly} $\U$-robustly learnable. To see this, we again use the terminology of \citet{ashtiani2020black}, and consider the margin class $\H^\U_\mar$, where each $h_{a,b,c}\in\H$ fixes a set $\mar_{h_{a,b,c}}^\U$, which we will denote by $\mar_{a,b,c}$ for brevity.
    Note that, from the property $k\subseteq\U(k)$ that holds for all $k\in\N$, only even integers can belong in some set $\mar_{a,b,c}$.
    Recall that $h_{a,b,c}$ maps at most two even numbers $a$ and $b$ to $1$ and all odd numbers to $c\in\{0,1\}$.
    
    We distinguish two cases:
    \begin{itemize}
        \item $c=0$: then $\mar_{a,b,0}=\set{2a\given T_a(a) \text{ halts}}\cup \set{2b\given T_b(b) \text{ halts}}$,
        \item $c=1$: then $\mar_{a,b,1}=\set{2i\given T_i(i)\text{ halts}}\setminus\set{2a,2b}$.
    \end{itemize}
    From this, we can see that shattered sets must only contain even integers $k$ such that $T_{k/2}(k/2)$ halts.
    We now argue that $\VC(\H^\U_\mar)=5$.
    Consider a set $X=\set{k_1,\dots,k_6}$ of size 6 such that $k_i$ is even and $T_{k_i/2}(k_i/2)$ halts for all $i=1,\dots,6$. 
    Then the subset $\set{k_1,k_2,k_3}$ cannot be part of the projection of $\H^\U_\mar$ onto $X$, and thus $\set{k_1,\dots,k_6}$ cannot be shattered. 
    Indeed, note that for $c = 0$, the margin set $\mar_{a,b,0}$ has size $2$ only for any $a,b\in\N$. 
    Thus $\set{k_1,k_2,k_3}\in \mar_{a,b,c}$ would imply that there exists $a,b\in\N$ such that $k_1,k_2,k_3\in\mar_{a,b,1}$. 
    However then, by definition, there are at most two of $k_4,k_5,k_6$ that are not in $\mar_{a,b,1}$, a contradiction.
    Finally, consider a set $X=\set{k_1,\dots,k_5}$ of size 5 such that $k_i$ is even and $T_{k_i/2}(k_i/2)$ halts for all $i=1,\dots,5$.
    Any subset $X'\subseteq X$ is contained in the projection of $\H^\U_\mar$ onto $X$: if $|X'|\geq 3$ there exists $a,b\in\N$ such that $X'\in\mar_{a,b,1}$, and otherwise if $|X'|\leq 2$ there exists $a,b\in\N$ such that $X'\in\mar_{a,b,0}$, and thus $X$ is shattered.
    
    Now, for any $i,k\in \N$, with $i\neq k$, we define a distribution $D_{i,k}$ on $\N\times\{0,1\}$ as follows:
    \begin{align*}
        D_{i,k}(2i,1)=D_{i,k}(2k,0)=1/2 \enspace.
    \end{align*}
    
    We will now show that $\H$ is not properly $\U$-robustly CPAC learnable with respect to the set of distributions $\set{D_{i,k}}_{i,k\in\N}$ in the realizable case.
    Note that robustly realizability here implies that at most one of the sets $\U(2i)$ or $\U(2k)$ is not a singleton.
    Indeed, if $\U(2i)=\set{2i}$ then $T_{i}(i)$ does not halt, whereas if $\set{2i}\subset\U(2i)$ (strict inequality) then $T_{i}(i)$ halts, and similarly for $2k$. 
    Hence, since we have $D_{i,k}(2i,1)=D_{i,k}(2k,0)=1/2$, to choose between $h_{i,i',0}$ and $h_{i,i',1}$ for some fixed $i'\in\N$ with $i'\neq k$ and $i'\neq i$ we need to know whether either of $\U(2i)$ or $\U(2k)$ is a singleton. 

    Now consider the following decision problem, which we term $\twohalt$.
    Given two Turing machines $T_i$ and $T_j$, the task is to correctly predict which run out of $T_i(i)$ and $T_j(j)$ halts if only one of them does. If both runs loop, the solver should still halt and produce an output; in case both halt the decider may run indefinitely or halt and produce some output. That is, we only require a correct decision if exactly one of $T_i(i)$ and $T_j(j)$ halts, and require the program to halt if both loop. We show below (see Lemma \ref{lem:twohalt_is_undecidable}) that no algorithm can satisfy these requirements.

    We now argue that a $\U$-robust proper CPAC learner can be used to solve $\twohalt$, and thus, by invoking Lemma \ref{lem:twohalt_is_undecidable}, which is stated and proved in Appendix~\ref{app:useful-proper-impossibility}, conclude that no such learner exists.  If a $\U$-robust proper CPAC learner for $\H$ existed, there is a sample size $M$ that suffices for robust loss at most $\epsilon = 1/3$ with probability $1-\delta = 2/3$ over all $M$ size samples from $D_{i,k}$. We then solve $\twohalt$ by running the learner on all $M$-length sequences over the points $(2i,1)$ and $(2k,0)$, and taking the majority for the label that the resulting predictor gives on the odd numbers.
    
    \end{proof}

\subsection{Robust CPAC learnability and its relationship to the computability of $\ell^\U$ and RERM}
\label{sec:oracle}

One of the obstacles with robust CPAC learning is that, even if a class consists of computable functions, the robust loss might not be pointwise computable. This is in contrast to the CPAC setting for which the binary loss is always computable for computable functions.
It is then natural to ask what relationship exists between the computability of the robust loss and the robust CPAC learnability of a class.
The following result demonstrates that the computability of the pointwise robust loss for all hypotheses in the class is not a necessary criterion for robust CPAC learnability.%

\begin{theorem}
\label{lemma:sep-cpac-rob-loss-uncomputable}
    There exist $\H$ and $\U$ such that (i) $\H$ is properly CPAC learnable, (ii) $\U$ is decidably representable, and (iii) $\H$ is properly $\U$-robustly CPAC learnable in the agnostic setting and admits a computable agnostic $\rerm_{\H}^{\U}$ oracle, but the function $\ell^\U$ is not computably evaluable on $\H$.
\end{theorem}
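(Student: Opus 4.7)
The plan is to construct a minimal example: a singleton hypothesis class paired with the same halting-encoding perturbation type used in Theorem~\ref{thm:proper-rob-cpac-imposs-realizable}. The key idea is that when $|\H|=1$, the CPAC learner and the agnostic $\rerm_\H^\U$ oracle both reduce to ``output the unique element,'' so neither ever has to evaluate $\ell^\U$; at the same time, by choosing that single hypothesis to separate evens from odds, the same perturbation structure that was problematic in Theorem~\ref{thm:proper-rob-cpac-imposs-realizable} will collapse the query $\ell^\U(h_*,2i,0)$ directly into the halting predicate.

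Concretely, I would set $\X=\N$ and reuse $\U(2i) = \{2i\} \cup \{2j+1 : T_i(i)\text{ halts in }j\text{ steps}\}$ together with $\U(2i+1)=\{2i+1\}$, already shown in Theorem~\ref{thm:proper-rob-cpac-imposs-realizable} to be decidably representable, which handles (ii). I would then define $h_*(2i)=0$ and $h_*(2i+1)=1$, note that $h_*$ is total computable by inspection, and take $\H=\{h_*\}$ with a canonical one-program DR, making $\H$ decidably representable.

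For (i) and (iii), the constant learner $\A(S) = h_*$ is a computable proper (agnostic) PAC learner, since the bound $\R(h_*, D) \leq \inf_{h \in \H} \R(h, D) + \epsilon$ is trivial, and the analogous argument for $\R_\U$ gives proper $\U$-robust CPAC learnability; this same $\A$ serves as a computable agnostic $\rerm_\H^\U$ oracle because $\argmin_{h \in \H} \R_\U(h, S) = \{h_*\}$ on every sample. Alternatively, I would note $\VC(\H) + \VC(\H^\U_\mar) \le 1$ and invoke Fact~\ref{fact:suff-rob-cpac} directly.

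The crux is the uncomputability of $\ell^\U$. Unfolding the definition: since $h_*(2i)=0$, the self-loop $z=2i$ is not a witness, while each odd $2j+1 \in \U(2i)$ satisfies $h_*(2j+1)=1\neq 0$, so $\ell^\U(h_*, 2i, 0) = \mathbf{1}[\exists j: T_i(i) \text{ halts in } j \text{ steps}] = \mathbf{1}[T_i(i) \text{ halts}]$, which is not computable in $i$; any total computable implementation of $\ell^\U$ would therefore decide the halting problem, yielding the required contradiction. The main obstacle here is conceptual rather than technical: one has to resist the intuition that a robust CPAC learner ever needs to query $\ell^\U$, and the singleton construction (or, more generally, any construction in which a fixed hypothesis is guaranteed to lie in $\argmin_{h\in\H} \R_\U(h, S)$ for every $S$) makes this precise.
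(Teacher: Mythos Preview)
Your proposal is correct. The singleton construction cleanly verifies all three positive conditions by trivializing the learner and the RERM oracle, and the computation $\ell^\U(h_*,2i,0)=\indct{T_i(i)\downarrow}$ is exactly the halting predicate, so $\ell^\U$ is not total computable on $\H$.

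Your route is genuinely different from the paper's. The paper builds an \emph{infinite} class $\H=\{h_{a,b}\}_{a,b}$ of VC dimension $1$ over $\N\times(\{0\}\cup\N)$, together with a proof/formula-based perturbation type, and then shows that although $\ell^\U$ is uncomputable on some $h_{a,b}$, there is a \emph{dominating} RER subclass $\H'=\{h_{a,\infty}\}$ on which $\ell^\U$ \emph{is} computable and which always contains a robust-risk minimizer; RERM over $\H'$ then yields the agnostic oracle and proper robust CPAC learnability via Fact~\ref{fact:suff-rob-cpac}. What your approach buys is minimality and transparency: it isolates the conceptual point---that robust CPAC learnability never forces you to evaluate $\ell^\U$---in the cleanest possible way. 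What the paper's approach buys is that the phenomenon is exhibited for a non-degenerate, infinite class, and it introduces the ``dominating computable subclass'' technique (see the remark after the full proof), which is of independent interest for handling richer $\H$ where no single hypothesis is uniformly optimal.
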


\begin{proof}[Sketch]
    This theorem is proven with the following construction.
    Let $\mathcal{X}$, $(\varphi_i)_{i \in \naturals}$ and $(\pi_i)_{i\in \naturals}$ be defined as in the construction for Theorem~\ref{thm:rob-cpac-imposs-agnostic}.
    Let $(i,0)$, represent the $i$-th formula and for any $i, j$ let $(i,j)$ represent the $j$-th proof (independently of $i$). 
    Let $\mathcal{H}=\{h_{a,b}\}_{a,b\in \naturals \cup \{0, \infty\}}$, where
    \[h_{a,b}(i,j) = 
        \begin{cases}
            1 &\text{ if } i < a \text{ or }(i = a \text{ and } j\leq b)\\
            0 &\text{ otherwise }
        \end{cases}
        \enspace.
    \]
    It is clear that $\VC(\H)=1$ and that ERM is computably implementable, so $\H$ is properly CPAC learnable.
    We define the perturbation regions as follows:
    \begin{align*}
        &\mathcal{U}(i,0) =\{(i,0)\}\cup \{(i,j): \pi_j \text{ proves } \varphi_i \}\enspace,\\
        &\mathcal{U}(i,j) = \{(i,0), (i,j)\} \enspace.
    \end{align*}
    It is also easy to verify that this perturbation type is DR.
    In the full proof, we show that in general there is no algorithm that evaluates the loss of each hypotheses $h_{a,b}$ on every point. However, there is an RER subclass $\H' = \{h_{a,\infty}: a \in \naturals\}\subset \H$ such that every hypothesis in $\H'$ is computably evaluable and such that on any distribution, $\H'$ contains an optimal (with respect to $\H$) hypothesis. It is thus possible to robustly CPAC learn $\H$ by performing RERM over $\H'$.
\end{proof}

The full proof is included in Appendix~\ref{app:proof-sep-cpac}. We next show that, even if the robust loss is computably evaluable on $\H$, and the class $\H$ is properly CPAC learnable and properly robustly PAC learnable, neither the existence of a robust ERM oracle nor robust CPAC learnability are implied.

\begin{theorem}
\label{thm:rob-loss-comp-but-no-proper-cpac-agnostic}
There exist a DR hypothesis class $\H$ and a DR perturbation type $\U$ such that (i) $\H$ is properly CPAC learnable, (ii) $\H$ is properly $\U$-robustly PAC learnable, and (iii) the robust loss is computably evaluable, but there is no strong realizable $\rerm_{\H}^{\U}$-oracle and $\H$ is not properly agnostically $\U$-robustly CPAC learnable.
\end{theorem}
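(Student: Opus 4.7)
The plan is to construct an explicit pair $(\H,\U)$ in which the halting problem is encoded in robust realizability while the pointwise robust loss stays decidable. Fix an enumeration $(T_n)_{n\in\N}$ of Turing machines, write $t_n\in\N\cup\{\infty\}$ for the halting time of $T_n(n)$, take $\X=\N\times\N$, and set
\[
\U((n,0))=\{(n,j):j\ge 0,\ T_n(n)\text{ has not halted within }j\text{ steps}\},\qquad \U((n,j))=\{(n,j)\}\text{ for }j\ge 1,
\]
together with $\H=\{h_\emptyset\}\cup\{h_{n,m}:n,m\in\N\}$, where $h_\emptyset\equiv 0$ and $h_{n,m}((n',j))=\mathbf{1}[n'=n\wedge j\le m]$. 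Decidability of $z\in\U(x)$ by direct simulation makes $\U$ decidably representable, and each $h_{n,m}$ is total computable, so $\H$ admits a DR representation. The crucial quantitative feature is that $\U((n,0))$ has size $t_n$ when $T_n(n)$ halts and is infinite otherwise.

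For the positive properties (i)--(iii), a case analysis on pairs (same column versus different columns) gives $\VC(\H)=1$; each margin set $\mar^\U_{h_{n,m}}$ is either empty or the singleton $\{(n,0)\}$, depending on whether $t_n>m+1$, so $\VC(\H^\U_\mar)=1$ as well, and Fact~\ref{fact:apu20-proper-rob-learning} yields proper $\U$-robustly PAC learnability. An ERM over $\H$ reduces to a finite search, since on any sample $S$ only the thresholds separating consecutive $j$-values from $S$ give distinct empirical behavior in each column; Fact~\ref{fact:ste22-vc-erm-computable-cpac} then yields proper CPAC learnability. Finally, computability of $\ell^\U$ reduces to decidability of $\ell^\U(h_{n,m},(n,0),1)=\mathbf{1}[t_n>m+1]$, which can be determined by simulating $T_n(n)$ for $m+1$ steps; all other combinations are trivially computable.

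For the impossibility claims, observe that the single-point sample $S_n=\{((n,0),1)\}$ is robustly realizable iff some $h_{n,m}$ satisfies $t_n\le m+1$, iff $T_n(n)$ halts, so a strong realizable $\rerm_{\H}^{\U}$ oracle applied to $S_n$ would decide the halting problem. For proper agnostic $\U$-robust CPAC, I would consider the Dirac distribution $D_n$ at $((n,0),1)$: then $\inf_{h\in\H}\R_\U(h,D_n)=0$ iff $T_n(n)$ halts, witnessed by $h_{n,t_n-1}$, and $=1$ otherwise. In the halting case any proper $(\epsilon,\delta)$-learner with $\epsilon<1$ must output an $h_{n,m}$ with $m\ge t_n-1$, because every other element of $\H$ has robust risk $1$ on $D_n$. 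For a deterministic computable learner this yields a total computable $m(n)$ upper-bounding $t_n$ whenever $T_n(n)$ halts, so simulating $T_n(n)$ for $m(n)+1$ steps decides halting, a contradiction; the randomized case follows by dovetailing over finite random-bit prefixes and taking the maximum $m$ appearing in observed outputs, which by the $\ge 1-\delta$ success probability must reach $t_n-1$.

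The main obstacle is to balance two competing requirements: computability of $\ell^\U$ demands that the existential quantifier over $\U((n,0))$ collapse to a decidable condition, while impossibility of RERM and proper agnostic learning requires that the class still hide the halting time $t_n$. The construction resolves this by having each $h_{n,m}$ label only the finite initial segment $\{(n,0),\ldots,(n,m)\}$ of column $n$ as $1$, so the loss question reduces to ``does $T_n(n)$ take more than $m+1$ steps to halt?''\ which is decidable, yet robustly realizing $\{((n,0),1)\}$ still requires some $m$ that bounds $t_n$, which is not uniformly computable. A secondary subtlety is the randomized-learner argument, handled by dovetailing over random-bit prefixes and using finiteness of the observed outputs.
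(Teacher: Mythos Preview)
Your proposal is correct and follows essentially the same construction and argument as the paper: column-wise initial-segment hypotheses, perturbation sets $\U((n,0))$ encoding bounded non-halting, VC-dimension-$1$ margin class via Fact~\ref{fact:apu20-proper-rob-learning}, and the Dirac distribution $D_n$ on $((n,0),1)$ reducing halting to both the strong realizable $\rerm$ oracle and proper agnostic robust CPAC learning. One small remark: the paper's agnostic reduction simply evaluates $\ell^\U(\A(S'),(n,0),1)$ (which is computable by (iii) and equals $0$ iff $T_n(n)$ halts), which is cleaner than extracting an index $m(n)$ from the output and sidesteps any question about readability of the representation; also, your randomized-learner paragraph is unnecessary here since CPAC learners are by definition deterministic computable functions.
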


\begin{proof}[Sketch]
We prove this theorem with the following construction.
Let the instance space be $\X= \naturals \times \naturals$.
Let the hypothesis class be $\H = \{h_{i,j}:i,j \in \naturals \}$, where 
\[h_{i,j}(x) = \begin{cases}
    1 & \text{ if } x=(i,k) \text{ with } k \leq j \\
    0 & \text{ otherwise }
\end{cases}\enspace,\]
i.e., $\H$ is the class of initial segments on $\{(i,j)\}_{j\in\N}$ for all $i\in\N$.
We set the perturbation types to:
\begin{align*}
    \U((i,0)) &= \{(i,0)\}\cup \{(i,k): T_i \text{ does not halt after } k \text{ steps on the empty input }\}\enspace,\\
    \U((i,j)) &= 
    \begin{cases} 
        \{(i,j), (i,0) \} & T_i \text{ does not halt after } j  \text{ steps on the empty input} \\ \{(i,j)\} & \text{ otherwise. } 
    \end{cases}
    \enspace
\end{align*}

The proof of the impossibility of agnostic robust CPAC learning and of the non-existence of a strong realizable RERM oracle relies on reductions from the Halting problem.
\end{proof}

The detailed proof shows that the construction in the sketch above fulfills all the requirements for proving the theorem and is included in Appendix~\ref{app:proof-rob-loss-comp-not-rcpac}.
Recall from Fact \ref{fact:losseval+RER->realRERM} that the class being RER (a weaker assumption than being DR) and the robust loss being computably evaluable imply the existence of a weak realizable  $\rerm_{\H}^{\U}$-oracle.
We therefore demonstrated a separation between the existence of RERM oracles in the robust realizable versus the agnostic case. Our proof also shows that the existence of weak realizable RERM oracles does not imply the existence of agnostic RERM orcalce. Furthermore, note that the existence of a strong realizable RERM oracle implies computable evaluablility of robust loss. Thus, the result of Theorem~\ref{lemma:sep-cpac-rob-loss-uncomputable} shows that the existence of a agnostic RERM oracle does not imply the existence of a strong realizable RERM oracle.

In the non-robust setting, proper \emph{strong} CPAC (SCPAC) learnability (which is CPAC learnability with the additional requirement that the sample complexity function $m(\cdot,\cdot,\cdot)$ be computable) implies the existence of an $\erm_{\H}$ oracle (\citet{sterkenburg2022characterizations}, Theorem~8). 
We finish this section by showing that a similar result holds in robust learning if the robust loss is computably evaluable.
The proof of Proposition~\ref{prop:proper->rerm} below is included in Appendix~\ref{app:proof-prop:proper->rerm}.

\begin{proposition} 
\label{prop:proper->rerm}
Let $\H$ be a hypothesis class and $\U$ a perturbation type such that $\ell^{\U}$ is computably evaluable. Then, if $\H$ is properly $\U$-robustly SCPAC learnable in the agnostic setting, $\H$ admits a computable agnostic $\rerm_{\H}^{\U}$ oracle; and if $\H$ is properly $\U$-robustly SCPAC learnable in the realizable setting, then $\H$ admits a computable weak realizable $\rerm_{\H}^{\U}$ oracle.
\end{proposition}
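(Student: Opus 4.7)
The plan is to adapt the classical reduction from proper SCPAC learnability to the existence of an ERM oracle (\citet{sterkenburg2022characterizations}, Theorem~8) to the robust setting, making use of both the computable sample complexity function supplied by SCPAC and the assumed computable evaluability of $\ell^{\U}$. The key conceptual observation is that for a labelled sample $S$ of size $n$, the uniform distribution $D_S$ over $S$ satisfies $\R_{\U}(h;D_S) = \R_{\U}(h;S)$ for every $h$, and these values are always multiples of $1/n$. Hence any $\tfrac{1}{2n}$-approximate robust risk minimizer with respect to $D_S$ is automatically an exact empirical robust risk minimizer on $S$.

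For the agnostic case, I would proceed as follows. Given an input sample $S$ of size $n$, first use the computable sample complexity function $m(\cdot,\cdot)$ to compute $M := m(\tfrac{1}{2n},\tfrac{1}{2})$. Next, enumerate all $n^M$ ordered $M$-tuples of labelled points from $S$---a finite, explicit enumeration---and for each tuple $S_i$ simulate the proper SCPAC learner $\A$ on $S_i$ to obtain a candidate $h_i \in \H$. Using computable evaluability of $\ell^{\U}$, compute $\R_{\U}(h_i;S)$ for each candidate and return any $h_i$ attaining the minimum among these values. Correctness follows because the product distribution $D_S^M$ is uniform over exactly those $n^M$ ordered tuples, so the agnostic PAC guarantee applied to $D_S$ implies that for at least half of the tuples $\A$ outputs a $\tfrac{1}{2n}$-approximate risk minimizer with respect to $D_S$, which by the opening observation is an exact empirical robust risk minimizer on $S$.

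For the realizable case I would run precisely the same procedure, only with the realizable SCPAC learner substituted for the agnostic one. If $S$ is robustly realizable then $D_S$ is a robustly realizable distribution, so the same counting argument guarantees that at least one of the enumerated tuples produces some $h_i$ with $\R_{\U}(h_i;S) = 0$; we detect this using computable evaluability of $\ell^{\U}$ and output such an $h_i$. On robustly non-realizable samples we are not required to halt, matching the definition of a weak realizable $\rerm_{\H}^{\U}$ oracle.

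The main step requiring care is the passage from the probabilistic PAC guarantee on $D_S^M$ to a deterministic counting statement: since $D_S^M$ is uniform on $n^M$ ordered tuples, the statement ``with probability at least $1/2$'' translates directly to ``at least $n^M/2$ tuples produce a good output'', which is what justifies the brute-force enumeration. The remaining ingredients---computing $M$ (supplied by SCPAC), evaluating empirical robust risk on each candidate (supplied by computable $\ell^{\U}$), and the rounding-based conversion of approximate minimization to exact minimization---are routine once this counting observation is in place.
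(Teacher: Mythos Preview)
Your proposal is correct and follows essentially the same approach as the paper's proof: form the uniform distribution $D_S$ over the input sample, enumerate all equally-likely size-$M$ sequences from it, run the proper SCPAC learner on each, and use computable evaluability of $\ell^{\U}$ to select a candidate of minimal empirical robust risk. The only differences are cosmetic parameter choices (you take $\epsilon=\tfrac{1}{2n}$, $\delta=\tfrac{1}{2}$ while the paper takes $\epsilon=\tfrac{1}{k+1}$, $\delta=\tfrac{1}{7}$), and both yield the same rounding argument that an $\epsilon$-approximate minimizer on $D_S$ is an exact empirical minimizer on $S$.
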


\section{Necessary Conditions for Robust CPAC Learning}
\label{sec:c-dim-u}

In this section, we define a complexity measure, denoted by $c$-$\dim_\U$, and prove its finiteness is implied by robust CPAC learnability for RER perturbation types.
We later study its relationship with the VC dimension, and show that it cannot characterize robust CPAC learnability.

\subsection{No-Free-Lunch Theorems}
\label{sec:nflt}

Prior work has shown that standard CPAC learnability is equivalent to the hypothesis class having finite \emph{effective VC dimension}, which can be thought of as a computable version of the VC dimension \citep{sterkenburg2022characterizations, delle2023find}. The effective VC dimension of $\H$ is the smallest $k$, such that there exists a computable function $w:\N^{k+1}\rightarrow\{0,1\}^{k+1}$ that, when given a set of domain points of size $k+1$, outputs a labelling that cannot be achieved by any $h\in\H$. The function $w$ is called a \emph{witness function}.
The complexity measure $c$-$\dim_\U(\H)$ that we introduce in this section is a computable version of the robust shattering dimension $\dim_\U(\H)$ which has been shown to lowerbound the robust sample complexity \citep{montasser2019vc}.

\begin{definition}[$\U$-robust shattering dimension \cite{montasser2019vc}]
\label{def:rob-shattering-dim}
    A set $X=\{x_i\}_{i=1}^m$ is $\U$-robustly shattered by $\H$ if there exists a set $\zdef$ with $x_i\in\U(\zin)\cap \U(\zip)$ for all $i=1,\dots,m$ such that for all $y_1\dots,y_m\in\{0,1\}$ there exists $h\in\H$ such that for all $i=1,\dots,m$ and $z'\in\U(z_i^{y_i})$ we have that $h(z')=y_i$.
    The \emph{robust shattering dimension of $\H$}, denoted $\dim_\U(\H)$, is the largest integer $k$ such that there exists a set of size $k$ that is $\U$-robustly shattered.
    If arbitrarily large $\U$-robustly shattered sets exist, then $\dim_\U(\H)=\infty$.%
\end{definition}

Note that, for $X$ to be robustly shattered, for all $i\neq j$ the regions $\U(\zip)$ and $\U(\zjn)$ must be disjoint. 
Now, contrary to the standard notion of shattering, there are two factors at play in a set being robustly shattered.
The first one is whether a set is robustly \emph{``shatterable''}, namely whether, by definition of $\U$--and without considering $\H$--the set $X$ \emph{could} be robustly shattered by an arbitrary set of functions (which holds trivially in the standard case). 
The second factor, if robust shatterability holds, is as in the standard case: whether there does in fact exists a set of hypotheses from $\H$ that can realize the robust shattering.
To highlight this distinction, we introduce the following property:

\begin{definition}[Robust shatterability]
\label{def:rob-shatterability}
    A set $X=\set{x_1,\dots,x_m}\subseteq\X$ is called \emph{$\U$-robustly shatterable} if there exists a set $\zdef\subseteq\X$ with $x_i\in\U(\zin)\cap \U(\zip)$ and for all $i\neq j$ the regions $\U(\zip)$ and $\U(\zjn)$ are disjoint.  
    In this case, $Z$ is said to \emph{admit a $\U$-robustly shatterable set}.
\end{definition}
We now define the computable version of Definition~\ref{def:rob-shattering-dim}, which also uses the notion of a witness function.
If $\dim_\U(\H)<k$ then for all $m\geq k$, for all $X=\{x_i\}_{i=1}^m$ and for all $\zdef$ with $x_i\in\U(\zin)\cap \U(\zip)$ there exists a labelling 
that is \emph{not} $\U$-robustly realizable by $\H$.
Thus: 

\begin{definition}[Computable robust shattering dimension]
\label{def:c-dim-u}
    A \emph{$k$-witness of  $\U$-robust shattering dimension} for $\H$ is a function $w:\N^{2(k+1)}\rightarrow\{0,1\}^{k+1}$ such that when given a set $Z=\{(\zin,\zip)\}_{i=1}^{k+1}$ that admits a robustly shatterable set, 
    it outputs $y_1\dots,y_{k+1}\in\{0,1\}$ such that for all $h\in\H$ there exist $i=1,\dots,m$ and $z'\in\U(z_i^{y_i})$ with $h(z')\neq y_i$. 
    The \emph{computable robust shattering dimension of $\H$}, denoted $\text{c-}\dim_\U(\H)$, is the smallest integer $k$ such that there exists a \emph{computable} $k$-witness of robust shattering dimension.
    If no such $k$ exists, $\text{c-}\dim_\U(\H)=\infty$.
\end{definition}

We note that, by definition, if $\U$ does not admit a shatterable set of size $k$, then any function is a $k$-witness.
Towards showing that finiteness of the computable robust shattering dimension is a necessary condition for $\U$-robust CPAC learnability, we first formulate a No-Free-Lunch theorem for robust learnability, leaving aside computability considerations for the time being.

\begin{lemma}[Robust No-Free-Lunch Theorem]
\label{lemma:rob-nflt}
    Let $\X$ be the domain and $\U:\X\rightarrow 2^\X$ a perturbation type. 
    Let $\A$ be any computable learner.
    Let $m$ be a number smaller than $|\X|/4$ representing the training set size, with the property that there exist $4m$ points $\zdef[2m]$ such that $Z$ admits a $\U$-robustly shatterable set.
    Then there exists a distribution $D$ over $\X\times\{0,1\}$ such that
    (i) there exists a function $f:\X\rightarrow\{0,1\}$ such that $\R_\U(f;D)=0$; and
    (ii) with probability at least $1/7$ over $S\sim D^m$, $\R_\U(\A(S);D)\geq 1/8$. 
\end{lemma}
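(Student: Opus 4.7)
The plan is to adapt the classical No-Free-Lunch argument to the robust setting, exploiting the shatterability structure of $Z$ in two complementary ways: (a) the disjointness property $\U(\zip)\cap\U(\zjn)=\emptyset$ for $i\neq j$ yields a family of robustly realizable distributions, while (b) the overlap condition $x_i\in \U(\zin)\cap \U(\zip)$ drives the lower bound on the learner's error. No computability assumption on $\A$ will be used; the statement is purely information-theoretic.

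For each labeling $y\in\{0,1\}^{2m}$, I would define a distribution $D_y$ uniform over the $2m$ labeled points $\{(z_i^{y_i},y_i)\}_{i=1}^{2m}$, together with a function $f_y:\X\to\{0,1\}$ given by $f_y(z)=y_i$ whenever $z\in\U(z_i^{y_i})$ (arbitrary elsewhere). If $z$ belonged to $\U(z_i^{y_i})\cap\U(z_j^{y_j})$ with $i\neq j$ and $y_i\neq y_j$, the shatterability condition would be violated, so $f_y$ is well-defined. A direct check then gives $\R_\U(f_y;D_y)=0$, establishing~(i).

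For~(ii), I would follow the classical averaging scheme. Fix an ordered tuple $J=(j_1,\dots,j_m)\in[2m]^m$, let $S^y_J=\{(z_{j_k}^{y_{j_k}},y_{j_k})\}_{k=1}^m$ be the sample produced under labeling $y$, and let $V=[2m]\setminus\{j_1,\dots,j_m\}$, so that $|V|\geq m$. Averaging over $y$ and restricting the risk to $V$ yields
\[
\frac{1}{2^{2m}}\sum_{y}\R_\U(\A(S^y_J);D_y)\;\geq\;\frac{1}{2m}\sum_{i\in V}\frac{1}{2^{2m}}\sum_{y}\indct{\exists z\in\U(z_i^{y_i}):\A(S^y_J)(z)\neq y_i}.
\]
For each $i\in V$, I would pair every $y$ with the labeling $y'$ that differs only at coordinate~$i$. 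Since $i\notin J$, the samples coincide and $\A(S^y_J)=\A(S^{y'}_J)=:h$, so the pair's combined error indicator reduces to $\indct{\exists z\in\U(\zin):h(z)\neq 0}+\indct{\exists z\in\U(\zip):h(z)\neq 1}$. This cannot vanish for both terms, since that would force $h(x_i)=0$ and $h(x_i)=1$ simultaneously. Hence the inner average over $y$ is at least $1/2$, the displayed quantity is at least $|V|/(4m)\geq 1/4$, and taking expectations over $J$ gives $\max_y\mathbb{E}_{S\sim D_y^m}[\R_\U(\A(S);D_y)]\geq 1/4$.

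Finally, since $\R_\U(\cdot;D_y)\in[0,1]$, a standard Markov-type computation converts the expectation bound into the tail bound $\Pr_{S\sim D_y^m}[\R_\U(\A(S);D_y)\geq 1/8]\geq 1/7$ for the witnessing labeling $y$, completing the proof. The central conceptual step is the paired-labeling contradiction: it is precisely the overlap $x_i\in \U(\zin)\cap\U(\zip)$ from the shatterability definition---rather than the existence of hypotheses realizing every labeling, as in Definition~\ref{def:rob-shattering-dim}---that makes the lower bound work, and isolating this weaker requirement is where the robust NFL departs from the classical one.
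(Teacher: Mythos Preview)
Your proposal is correct and follows essentially the same approach as the paper: define the family of uniform distributions $D_y$ over $\{(z_i^{y_i},y_i)\}$, use the disjointness condition to verify robust realizability by $f_y$, and use the overlap $x_i\in\U(\zin)\cap\U(\zip)$ to force every hypothesis to incur robust loss on at least one member of each pair, then invoke the standard averaging/Markov argument. Your write-up is in fact more explicit than the paper's, which after setting up the distributions largely defers to the classical No-Free-Lunch proof and only highlights that the loss is witnessed on the shatterable set $X$; your paired-labeling computation spells out precisely the step the paper leaves implicit.
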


The proof of Lemma~\ref{lemma:rob-nflt} is included in Appendix~\ref{app:nflt}.
We comment on the robust shatterability in the impossibility results from Section~\ref{sec:relating-cpac-rob-cpac} in Appendix~\ref{app:comment-impossibility}. %

\begin{remark}
    First note that we work with $\X = \N$, so the condition $m\leq |\X|/4$ is immediately satisfied for any $m$.
    Second, note that the robust shatterability condition in the theorem, which depends on the perturbation type $\U$, is satisfied if the regions $\U(x)$ are, e.g., balls of finite radii. 
\end{remark}

We are now ready to state the computable version of Lemma~\ref{lemma:rob-nflt}, which involves not only the existence of a distribution and function that are robustly realizable on which the learner performs poorly, but also the ability to computably find such a pair:

\begin{lemma}[Computable Robust No-Free-Lunch Theorem]
\label{lemma:c-rob-nflt}
    Fix instance space $\X$ and perturbation type $\U:\X\rightarrow2^\X$.
    Let $\A$ be any computable learner and furthermore suppose that the perturbation type $\U$ is recursively enumerably representable (RER). 
    Then for any $m\in\N$, any domain $\X$ of size at least $4m$ and any subset $\zdef[2m]$ that admits a $\U$-robustly shatterable set, we can computably find a function $f:\X\rightarrow\{0,1\}$ that is computable on $Z$ such that 
    \begin{equation}
    \label{eqn:c-rob-nflt}
        \eval{S\sim D^m}{R_\U(\A(S);D)}\geq 1/4 \qquad \text{and} \qquad \prob{S\sim D^m}{R_\U(\A(S);D)\geq 1/8}\geq 1/7\enspace, 
    \end{equation}
    where $D$ is the uniform distribution on $\set{(z_i^{y_i},y_i)}_{i=1}^{2m}$ for some $y\in\{0,1\}^{2m}$ such that for all $i=1,\dots,2m$ and for all $z'\in\U(z_i^{y_i})$ we have that $f(z')=y_i$ .
\end{lemma}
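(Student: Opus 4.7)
The plan is to turn the non-computable robust NFL (Lemma~\ref{lemma:rob-nflt}) into a constructive procedure by (a) replacing $\R_\U$ with a computable lower bound that uses only evaluations of the learner's output at specific points, and (b) running a finite brute-force search over all $2^{2m}$ possible labelings.

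First I would use the RER hypothesis on $\U$ to computably identify, for each $i=1,\dots,2m$, a point $x_i\in\U(\zin)\cap\U(\zip)$. Such an $x_i$ exists because $Z$ admits a $\U$-robustly shatterable set, and it can be found by running two parallel enumerations of $\U(\zin)$ and $\U(\zip)$ (possible since $\U$ is RER) and returning the first common element encountered; the shatterability assumption guarantees termination. I then define the computable lower bound
\[
\tilde R(S,y) \ :=\ \frac{1}{2m}\sum_{i=1}^{2m}\indct{\A(S)(x_i)\neq y_i}.
\]
Since $x_i\in \U(z_i^{y_i})$ for either choice of $y_i$, we have $\R_\U(\A(S);D_y)\geq \tilde R(S,y)$ for every $y$, where $D_y$ is the uniform distribution on $\{(z_i^{y_i},y_i)\}_i$.

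Next, the standard NFL averaging shows that $\mathbb{E}_{y,S}[\tilde R(S,y)]\geq 1/4$ when $y$ is uniform on $\{0,1\}^{2m}$ and $S\sim D_y^m$: any fixed index $i$ fails to be sampled with probability at least $1/2$ (union bound over $m$ draws from $2m$ indices), and conditionally on this, $y_i$ remains uniform while $\A(S)(x_i)$ is determined by $S$, so they disagree with probability $1/2$; summing over $i$ yields the $1/4$ bound. To find $y^*$ constructively, I would iterate over the $2^{2m}$ labelings and, for each $y$, compute $\mathbb{E}_{S\sim D_y^m}[\tilde R(S,y)]$ exactly as a finite sum over the $(2m)^m$ possible samples. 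Each summand is computable since $\A$ is a computable learner outputting total computable predictors, so $\A(S)(x_i)$ always terminates. The averaging bound guarantees at least one $y^*$ with $\mathbb{E}_S[\tilde R(S,y^*)]\geq 1/4$; pick any such $y^*$.

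For the predictor $f$, take the characteristic function of $\bigcup_{i:y_i^*=1}\U(z_i^1)$. The disjointness condition in Definition~\ref{def:rob-shatterability} makes $f$ well-defined and ensures $f(z')=y_i^*$ for every $z'\in\U(z_i^{y_i^*})$, giving the required robust realization of $D_{y^*}$. For computability on $Z$, note that $z_i^{y_i^*}\in\U(z_i^{y_i^*})$ by reflexivity, so $f(z_i^{y_i^*})=y_i^*$ is immediate; for the remaining points $z_i^{1-y_i^*}\in Z$, RER of $\U$ lets us run parallel enumerations of membership in $\bigcup_{j:y_j^*=1}\U(z_j^1)$ and $\bigcup_{j:y_j^*=0}\U(z_j^0)$, and the disjointness guarantees the two searches are mutually exclusive, pinning down the correct label whenever the point lies in either union. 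Finally, combining $\R_\U(\A(S);D_{y^*})\geq \tilde R(S,y^*)$ with the expectation bound gives the first inequality in~\eqref{eqn:c-rob-nflt}, and a standard Markov-type argument ($X\in[0,1]$ with $\mathbb{E} X\geq 1/4$ implies $\Pr[X\geq 1/8]\geq (1/4-1/8)/(1-1/8)=1/7$) gives the second.

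I expect the main technical obstacle to be the construction of $f$ on the complement of $\bigcup_i\U(z_i^{y_i^*})$: since RER only yields semi-decidability of $\U$-membership, one cannot in general decide whether a point of $Z$ lies outside the union, so some care is needed to argue that outputting a canonical default value on this ambiguous case, consistent with the freedom of $f$ there, still yields a terminating algorithmic description of $f|_Z$.
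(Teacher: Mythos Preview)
Your proposal is correct and follows essentially the same approach as the paper: locate the shatterable witnesses $x_i$ via parallel RER enumeration of $\U(\zin)$ and $\U(\zip)$, replace the robust loss by the computable surrogate based on evaluating $\A(S)$ at the $x_i$, brute-force over all $2^{2m}$ labelings and $(2m)^m$ samples to find one whose surrogate expectation meets the threshold, and invoke reverse Markov for the probability bound. The paper's proof defers the averaging step to Lemma~\ref{lemma:rob-nflt} rather than re-deriving it and is less explicit than you are about constructing $f$ and its computability on $Z$---in fact it glosses over precisely the obstacle you flag in your final paragraph---but the logical structure is identical.
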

The proof of Lemma~\ref{lemma:c-rob-nflt} follows a similar argument as the proof of the computable  No-Free-Lunch Theorem shown in \cite{agarwal2020learnability}, but the computation of the robust loss differs:
\begin{proof}
We first computably find a shatterable set $X = \{x_1, x_2, \dots, x_{2m}\}$
corresponding to the set of pairs $\zdef[2m]$. 
That is $x_i \in \U(\zin)\cap\U(\zip)$ for all $i\in[2m]$. Since the sets $\U(\zin)$ and $\U(\zip)$ are recursively enumerable, we can computably find such an $x_i$ in their intersection for each $i$.
We simply enumerate the instances from  $\U(\zin)$ and $\U(\zip)$ by alternating between the two sets until we find an instance that belongs to both enumerations. 
Because we are guaranteed that $Z$ admits a $\U$-robustly shatterable set, this process is guaranteed to halt.

The existence of a function $f$ and distribution $D$ satisfying Equation~\ref{eqn:c-rob-nflt} follows directly from Lemma~\ref{lemma:rob-nflt}, so it remains to show that, using $\A$ we can computably find such an $f$ and $D$.
Note that the construction of the pairs $(f_j,D_j)$ in Lemma~\ref{lemma:rob-nflt} corresponds to pairs of $\U$-robustly realizable distributions. 
There are $T=2^{2m}$ such pairs, as putting mass on $z_i^{y_i}$ implies not putting mass on $z_i^{\neg y_i}$.
Second, for a given $(f_j,D_j)$ and a fixed sample size $m$, there are $k=(2m)^{m}$ sequences $\{S_l^j \}_{l=1}^k $ of $m$ instances drawn from $D_j$, each equally likely as $D_j$ is uniform on its support. 
We now run $\A$ on all these sequences, and compute a lower bound on the expected robust loss of $\A$ on $(f_j,D_j)$ by employing the labels of the learner's output on the shatterable set $X$: 
    \begin{align*}
        \eval{S\sim D_j^m}{\R_\U(\A(S);D_j)}
        =\frac{1}{k}\sum_{l=1}^k \R_\U(\A(S^j_l);D_j)
        & =\frac{1}{k}\sum_{l=1}^k \frac{1}{2m}\sum_{i=1}^{2m}\ell^\U\left(\A(S^j_l),z_i^{(j)},y_i^{(j)}\right)\\
        & \geq \frac{1}{k}\sum_{l=1}^k \frac{1}{2m}\sum_{i=1}^{2m}\indct{\A(S^j_l)(x_i) \neq y_i^{(j)}}\enspace.
    \end{align*}
   As argued in the proof of Lemma~\ref{lemma:rob-nflt}, there exist $f_j, D_j$ such that $$\frac{1}{k}\sum_{l=1}^k \frac{1}{2m}\sum_{i=1}^{2m}\indct{\A(S^j_l)(x_i) \neq y_i^{(j)}}
   \geq 1/8\enspace.$$ Furthermore, we note that the for every $j$, the set of sequences $\{S_l^j\}_{j=1}^{k}$ can be computably generated and that the expression $\frac{1}{k}\sum_{l=1}^k \frac{1}{2m}\sum_{i=1}^{2m}\indct{\A(S^j_l)(x_i) \neq y_i^{(j)}}$ can be computably evaluated. Thus by iterating through $j=1, \dots, T$ until finding $j$ that satisfies $\frac{1}{k}\sum_{l=1}^k \frac{1}{2m}\sum_{i=1}^{2m}\indct{\A(S^j_l)(x_i) \neq y_i^{(j)}}
   \geq 1/8$, we can computably find $(f_j,D_j)$ with  $\eval{S\sim D_j^m}{\R_\U(\A(S);(D_j)} \geq 1/8$. We further note that the corresponding labelling $f_j$ can be computably evaluated on all elements of $Z$ as required.
\end{proof}

Note that if the perturbation sets are DR, then the function $f$ above is total computable on $\X$.
We now state the main result of this section:

\begin{theorem}
\label{thm:nec-rob-cpac}
    Let $\H$ be (improperly) $\U$-robustly CPAC learnable and suppose the perturbation type $\U$ is RER.
    Then the computable $\U$-robust shattering dimension $\text{c-}\dim_\U(\H)$ of $\H$ is finite, i.e., $\H$ admits a computable $k$-witness of $\U$-robust shattering dimension for some $k\in\N$.
\end{theorem}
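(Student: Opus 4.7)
The plan is to prove the contrapositive by constructing, from a hypothetical computable $\U$-robust CPAC learner $\A$ for $\H$, a computable $k$-witness of $\U$-robust shattering dimension for some finite $k$. The argument parallels the standard CPAC-implies-finite-effective-VC reasoning, but uses Lemma~\ref{lemma:c-rob-nflt} (Computable Robust NFL) in place of the classical no-free-lunch theorem, and produces a robust shattering witness rather than an effective VC witness. The RER assumption on $\U$ is precisely what is needed to invoke Lemma~\ref{lemma:c-rob-nflt}.

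First I would fix parameters so that the CPAC success probability and the NFL failure probability are incompatible: take $(\epsilon, \delta) = (1/16, 1/8)$ and set $m := m(\epsilon, \delta)$, the sample complexity of $\A$ at this level. Set $k := 2m - 1$. The candidate witness is the following procedure. On input $Z = \{(\zin, \zip)\}_{i=1}^{2m}$ that admits a $\U$-robustly shatterable set, invoke Lemma~\ref{lemma:c-rob-nflt} on $(\A, Z)$ to computably produce a labeling $y \in \{0,1\}^{2m}$ such that, for $D$ the uniform distribution on $\{(z_i^{y_i}, y_i)\}_{i=1}^{2m}$, one has $\Pr_{S \sim D^m}[\R_\U(\A(S); D) \geq 1/8] \geq 1/7$. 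The witness outputs this $y$.

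Next I would verify that $y$ is a genuine witness of $\U$-robust shattering dimension. Suppose for contradiction that some $h \in \H$ satisfies $h(z') = y_i$ for every $i \in [2m]$ and every $z' \in \U(z_i^{y_i})$. Then $\R_\U(h; D) = 0$, so $D$ is $\U$-robustly realizable by $\H$. The CPAC guarantee for $\A$ at $(\epsilon, \delta) = (1/16, 1/8)$ then yields $\Pr_{S \sim D^m}[\R_\U(\A(S); D) \leq 1/16] \geq 7/8$. Since $1/16 < 1/8$, this event is disjoint from the NFL event, so their probabilities sum to at most $1$; yet $7/8 + 1/7 = 57/56 > 1$, a contradiction. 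Hence no such $h$ exists, $y$ is a valid witness, and since the whole procedure is computable (composing $\A$ with Lemma~\ref{lemma:c-rob-nflt}), we obtain $\cdim(\H) \leq 2m - 1 < \infty$.

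The main subtlety I expect is verifying that the labeling $y$ itself (rather than only the realizing function $f$) is computably extractable from Lemma~\ref{lemma:c-rob-nflt}; inspection of its proof shows that it enumerates the $2^{2m}$ candidate labelings, finitely evaluates the relevant empirical averages, and halts on one satisfying the bound, so $y$ is directly returned. A secondary minor point is that the sample complexity function $m(\cdot,\cdot)$ need not be uniformly computable in its arguments under the bare CPAC definition, but we only require a single value at the fixed pair $(1/16, 1/8)$, which may be hard-coded into the witness procedure non-uniformly. Neither of these points involves genuine mathematical content; the real work is packaged inside Lemma~\ref{lemma:c-rob-nflt}, and the remaining argument is a short probabilistic consistency check.
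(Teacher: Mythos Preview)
Your proposal is correct and follows essentially the same route as the paper's proof: fix a sample size $m$ from the CPAC guarantee, invoke the Computable Robust No-Free-Lunch Lemma on any input $Z$ of $2m$ pairs to extract a labelling $y$, and argue that robust realizability of $y$ would force the CPAC success event and the NFL failure event to have total probability exceeding $1$. The only cosmetic differences are your parameter choice $(\epsilon,\delta)=(1/16,1/8)$ versus the paper's $(1/8,1/7)$, and that you spell out the disjointness-of-events arithmetic while the paper states the two probability bounds and notes the direct contradiction; your added remarks on hard-coding $m$ and on the explicit extractability of $y$ from the proof of Lemma~\ref{lemma:c-rob-nflt} are accurate and not addressed in the paper's write-up.
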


The proof, included in Appendix~\ref{app:proof-nec-rob-cpac}, follows the reasoning of Lemma~9 in \cite{sterkenburg2022characterizations}.

\subsection{Relationship between the effective VC dimension and $c\text{-}\dim_\U$}

It has been shown that the dimension $\dim_\U(\H)$ yields a lower bound on the sample complexity for $\U$-robust learning \citep{montasser2019vc}. However,  it has remained an open question whether this dimension also provides an upper bound on the sample complexity, thus whether it actually characterizes $\U$-robust learnability. Here we show that a corresponding conjecture in the CPAC setting is not true. We first relate the effective VC dimension, which we will identify as $\cvc$ to denote computability, to the computable robust shattering dimension.

\begin{lemma}
\label{lemma:c-dim-u-upper-bound-c-vc}
    For any class $\H$ and perturbation type $\U$ admitting a recursively enumerable representation, we have $\cdim(\H)\leq \cvc(\H)$.
\end{lemma}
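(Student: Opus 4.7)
The plan is to reduce the computable robust shattering problem to the standard computable shattering problem, using the fact that a robustly shatterable set $X$ witnesses both a valid robust shattering configuration and a standard shattering-style target that a hypothesis must realize on $X$ in order to robustly realize a given labeling on $Z$. Concretely, let $k=\cvc(\H)$ and fix a computable witness function $w_{\VC}:\N^{k+1}\to\{0,1\}^{k+1}$ with the property that for every $(x_1,\dots,x_{k+1})\in\N^{k+1}$, the labeling $w_{\VC}(x_1,\dots,x_{k+1})$ is not achieved by any $h\in\H$ on this tuple. The goal is to build from $w_{\VC}$ a computable $k$-witness $w$ of $\U$-robust shattering, which immediately yields $\cdim(\H)\le k=\cvc(\H)$.

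The construction of $w$ proceeds as follows. On input $Z=\{(\zin,\zip)\}_{i=1}^{k+1}$, first computably extract, for each $i\in[k+1]$, an instance $x_i\in\U(\zin)\cap\U(\zip)$ by dovetailing two enumerations of $\U(\zin)$ and $\U(\zip)$ (possible because $\U$ is RER, i.e., the set $\{(x,z)\st z\in\U(x)\}$ is r.e.) until a common instance is produced. Whenever $Z$ admits a $\U$-robustly shatterable set, such an $x_i$ exists for each $i$ and the search halts, as required by the definition of a $k$-witness (which only constrains behavior on inputs admitting a robustly shatterable set). Then output $w(Z):=w_{\VC}(x_1,\dots,x_{k+1})$. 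By construction this is a total computable procedure on valid inputs.

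Correctness follows by contrapositive. Write $y=(y_1,\dots,y_{k+1})=w(Z)$ and suppose for contradiction that there exists $h\in\H$ such that for every $i$ and every $z'\in\U(z_i^{y_i})$ we have $h(z')=y_i$. Since $x_i\in\U(\zin)\cap\U(\zip)\subseteq\U(z_i^{y_i})$, this forces $h(x_i)=y_i$ for all $i\in[k+1]$, meaning $h$ realizes the labeling $y$ on the tuple $(x_1,\dots,x_{k+1})$. This contradicts the defining property of $w_{\VC}$. Hence for every $h\in\H$ there exist $i$ and $z'\in\U(z_i^{y_i})$ with $h(z')\ne y_i$, so $w$ is a valid computable $k$-witness of $\U$-robust shattering dimension, and $\cdim(\H)\le\cvc(\H)$.

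The proof is essentially a reduction, and I do not anticipate any serious obstacle. The only subtle point worth flagging is that we cannot a priori decide whether a given $Z$ admits a robustly shatterable set (since $\U$ is only assumed RER), but this is harmless: the definition of a $k$-witness imposes requirements only on inputs that do admit a shatterable set, and on such inputs the dovetailed search for $x_i\in\U(\zin)\cap\U(\zip)$ is guaranteed to halt.
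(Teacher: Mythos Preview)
Your proposal is correct and follows essentially the same approach as the paper: use the RER property of $\U$ to computably extract points $x_i\in\U(\zin)\cap\U(\zip)$ from $Z$, then feed the resulting tuple into the effective VC witness. Your write-up is in fact more detailed than the paper's (you spell out the dovetailing and the contrapositive correctness argument, and you explicitly note that the witness need not halt on inputs $Z$ that do not admit a shatterable set), but the underlying idea is identical.
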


\begin{proof}
    Let $\H$ have finite effective VC dimension $k$. 
    Then there exist a computable function $w:\N^{k+1}\rightarrow\{0,1\}^{k+1}$ such that $w$ is a $k$-witness of VC dimension for $\H$.
    Now, let $\zdef[k+1]$ admit a robustly shatterable set.
    Because $\U$ is RER, we can find a robustly shatterable set $X$ of size $k+1$ for $Z$. 
    It suffices to output the labelling $w(X)$, which is not achievable by any $h\in\H$, and thus not robustly achievable by any $h\in\H$ as well, and so $c\text{-}\dim_\U(\H)\leq k$, as required.
\end{proof}

We know that CPAC learnability of a class $\H$ implies $c\text{-}\VC(\H) < \infty$ \citep{sterkenburg2022characterizations}, and thus, by the above lemma, CPAC learnability implies 
$c\text{-}\dim_\U(\H)\leq c\text{-}\VC(\H) <\infty$. On the other hand, we have shown in this work that there exist $\H$, $\U$ such that  CPAC learnability does not imply $\U$-robust CPAC learnability (see Theorem \ref{thm:rob-cpac-imposs-agnostic}).
 Thus there are classes that are not $\U$-robust CPAC learnable while having finite effective robust shattering dimension, meaning that this dimension does not characterize robust CPAC learnability:

\begin{corollary}
\label{cor:c-dim-u-not-char-cpac}
The dimension $ c\text{-}\dim_\U(\H)$ does not characterize $\U$-robust CPAC learnability.
\end{corollary}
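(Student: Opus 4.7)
The plan is to bundle together the three ingredients that have already been established in the preceding material, so the proof is essentially a short chain of implications rather than any new construction. First I would invoke Theorem~\ref{thm:rob-cpac-imposs-agnostic}, which exhibits a hypothesis class $\H$ and a perturbation type $\U$ with the properties that $\H$ is properly CPAC learnable, that $\U$ is decidably representable, and that $\H$ is not (even improperly) $\U$-robustly CPAC learnable in the agnostic setting. Since decidable representability trivially implies recursively enumerable representability, this $\U$ is in particular RER, and the hypotheses of Lemma~\ref{lemma:c-dim-u-upper-bound-c-vc} are met.

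Next I would use the effective-VC-dimension characterization of CPAC learnability of \citet{sterkenburg2022characterizations}: because $\H$ is CPAC learnable we have $\cvc(\H)<\infty$. Feeding this into Lemma~\ref{lemma:c-dim-u-upper-bound-c-vc} gives $\cdim(\H)\le \cvc(\H)<\infty$.

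Combining these two observations yields a concrete pair $(\H,\U)$ with $\cdim(\H)<\infty$ that is nevertheless not $\U$-robustly CPAC learnable, which is exactly the assertion of Corollary~\ref{cor:c-dim-u-not-char-cpac}. I do not anticipate a genuine obstacle: every component is already in hand, and the only point worth double-checking is that the perturbation type supplied by Theorem~\ref{thm:rob-cpac-imposs-agnostic} meets the RER assumption needed to apply Lemma~\ref{lemma:c-dim-u-upper-bound-c-vc}, which it does by virtue of being DR.
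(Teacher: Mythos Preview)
Your proposal is correct and follows essentially the same argument as the paper: invoke Theorem~\ref{thm:rob-cpac-imposs-agnostic} for a CPAC-learnable but not $\U$-robustly CPAC-learnable pair $(\H,\U)$, use the Sterkenburg characterization to get $\cvc(\H)<\infty$, and then apply Lemma~\ref{lemma:c-dim-u-upper-bound-c-vc} (noting DR $\Rightarrow$ RER) to conclude $\cdim(\H)<\infty$. The paper presents this reasoning in the paragraph immediately preceding the corollary rather than as a separate proof, but the logical content is identical.
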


\section{Conclusion}

We have initiated the study of robust computable PAC learning, and provided a formal framework for its analysis. 
We showed sufficient conditions that enable robust CPAC learnability, as well as showed that CPAC learnability and robust learnability are not in themselves sufficient to guarantee robust CPAC lernability.
We also exhibited a counterintuitive relationship between the computability of the robust loss and robust CPAC learnability.
This is of particular interest, as the evaluability of the robust loss has often implicitly been used in the literature on the theory of robust learning, or potential issues on the evaluability of the robust loss have been circumvented by the use of oracles. 
We finished by studying the role of the computable robust shattering dimension in robust CPAC learnability, showing that its finiteness is a necessary but not sufficient condition. 

\section*{Acknowledgements}
{Pascale Gourdeau has been supported by a Vector Postdoctoral Fellowship and an NSERC Postdoctoral Fellowship. Tosca Lechner has been supported by a Vector Research Grant and a Waterloo Apple PhD Fellowship. Ruth Urner is also an Affiliate Faculty Member at Toronto's Vector Institute, and acknowldeges funding through an NSERC Discovery grant.}

\bibliographystyle{plainnat}
\bibliography{robust-cpac}

\newpage
\appendix

\section{Proofs and Remarks from Section~\ref{sec:warm-up}}
\label{app:proofs-warm-up}

\subsection{Proof of Theorem \ref{thm:cpac-from-c-online+finite-rob-loss+pao}}
\label{app:rob-cpac-algo}

\begin{proof}
    We will use the computable online algorithm $\A$ as a black-box. 
    Let $D:=\VC(\H)+\VC(\H_\mar^\U)$ and note that online learnability implies that $\VC(\H)\leq\Lit(\H)<\infty$, and so $D<\infty$.
    
    Let $S=\set{(x_i,y_i)}_{i=1}^m$ be drawn i.i.d. from an arbitrary robustly-realizable distribution $D$ on $\X\times\{0,1\}$.  
    We outline below a robust CPAC algorithm, which is essentially the $\mathsf{CycleRobust}$ algorithm of \citep{montasser2021adversarially} with the online algorithm $\A$ being used, implements robust ERM on $S$:
    
    \begin{algorithm}[H]
    \caption{$\U$-robust CPAC algorithm}
    \KwIn{$S=\set{(x_i,y_i)}_{i=1}^m$, computable online algorithm $\A$ for $\H$}
    \KwOut{$h\in\H$ such that $h$ achieves zero robust loss on $S$}

    $K\gets 0$, $h\gets \A(\emptyset)$\;
    \While{$K\leq \Lit(\H)$}{
        \For{$i=1,\dots,m$}{
            \uIf(\Comment{robust loss is 1, get counterexample}){$(z,y)\gets \mathsf{PAO}(h,x_i,y_i)$}{
                $K++$ \; \Comment{Update mistake count}\\
                $h\gets\A((z,y))$ \Comment{Update hypothesis}\\
                Break \Comment{Exit for-loop}
            }
            \uElseIf{$i=m$}{
                return $h$ \Comment{$h$ is robustly consistent on $S$}
            }\Comment{No counterexample $\Rightarrow$ go to next index}
        }
    }
    return $h$
    \end{algorithm}
 
    Note that, because of robust realizability, we are guaranteed that $\A$ makes at most $\Lit(\H)$ mistakes, after which $h$ will be $\U$-robustly consistent on $S$. 
    The program above makes a finite number of calls to $\A$, and thus is computable by the fact that $\A$ is itself a computatble online learner.
    By Fact~\ref{fact:suff-rob-cpac}, we are done.
\end{proof}

\subsection{Decidable Representation of Perturbation Types}
\label{app:example-u-dr}

\begin{example}
\label{ex:u-dr-necessary}
    We exhibit $\H$, $\U$ such that $\H$ is CPAC learnable and $\U$-robustly PAC learnable but $\H$ is not (improperly) $\U$-robustly CPAC learnable. 
    In this example, $\U$ is not DR. 
    The hypothesis class is as follows:
    $$\H:=\set{h_{a,b}(i)=\mathbf{1}[i\in\set{a,b}]}\enspace.$$
    Clearly $\VC(\H)=2$ and ERM is computably implementable, so by Fact~\ref{fact:ste22-vc-erm-computable-cpac}, $\H$ is CPAC learnable.
    Moreover, $\H$ having finite VC dimension implies that it is $\U$-robustly PAC learnable \citep{montasser2019vc}.
    
    We now define the perturbation type:
    $$\U(2i)=\{2i\}\cup\{2i+1\given T_i \text{ halts on the empty word}\}\enspace,$$
    $$\U(2i+1)=\{2i+1\}\cup\{2i\given T_i \text{ halts on the empty word}\}\enspace.$$
    
    We now define distributions $D_i$:
    $$D_i(2i,1)=D_i(2i+1,0)=1/2\enspace.$$

    Clearly, the labelling $l(2_i)=1$ and $l(2i+1)=0$ is optimal (and gives zero robust risk) if and only if $T_i$ does not halt on the empty word. 
    Otherwise, if $T_i$ halts, $l$ incurs a robust risk of 1, and any function satisfying $h(2i)=h(2i+1)$ is a robust risk minimizer incurring a robust risk of $1/2$.

    If $\H$ were (improperly) $\U$-robustly CPAC learnable, there would be a robust learning algorithm $\A$ for $\H$ with sample complexity $m(\epsilon,\delta)$. 
    Letting $M\in\N$ being an upper bound on $m(1/3,1/2)$, then with probability at least $2/3$, $\A$ returns a hypothesis with robust risk within $1/3<1/2$ of the robust risk minimizer in $\H$ when run on a sample of size $M$.
    Then, running $\A$ on all $2^M$ sequences of length $M$ with elements in $\{2i,2i+1\}$ (which are equally probable), we end up with optimal hypotheses on two thirds of the runs.
    Checking, for each hypothesis $h$, whether $h(2i)=h(2i+1)$ and taking the majority vote over all sequences, we get a decider for whether $T_i$ halts on the empty word.
    And so $\H$ is not (improperly) $\U$-robustly CPAC learnable.
    Note here that if we had a way to know whether $k\in\U(2i)$, i.e., if $\U$ was DR, we would immediately be able to implement robust ERM for samples from the distributions defined above.
    Not requiring perturbation regions to be DR thus makes deriving impossibility results for robust CPAC learning relatively trivial.  
\end{example}

\section{Proofs and Algorithms from Section~\ref{sec:impossibility-results}}
\label{app:proofs-impossibility-results}

\subsection{Proof of Theorem~\ref{thm:rob-cpac-imposs-agnostic}}
\label{app:proof-rob-cpac-imposs-agnostic}
\begin{proof}
    Fix a proof system for ﬁrst-order logic over a rich enough vocabulary that is sound and complete.
    Let $\set{\varphi_i}_{i\in\N}$ and $\set{\pi_j}_{j\in\N}$ be enumerations of all theorems and proofs, respectively.
    We define the following hypothesis class on $\N$:
    $$\H=\set{h_{a} \; : \; \forall i\in \N,\; h_{a}(2i)= \mathbf{1} [i\leq a] \wedge h_{a}(2i+1)= 1}_{a \in \N \cup\set{\infty}}\enspace.$$
    Thus, $\H$ is the the concept class where each function defines a threshold on even integers, and is the constant function 1 on odd integers. 

    First, to show (i), note that $\VC(\H)=1$, as for any set $X$ of two even integers, the labelling $h(x)=0$, $h(z)=1$ cannot be realized on two even integers $x < z$, and odd integers can only have labelling 1, and thus cannot be in a shattered set.
    As ERM is easily implementable for $\H$, we have that $\H$ is properly CPAC learnable by Fact~\ref{fact:ste22-vc-erm-computable-cpac}.

    Now, we define the perturbation sets as follows. For any $i\in\N$ we set:
    \begin{align*}
        &\U(6i) = \set{6i} \cup \set{2j+1}_{j\in\N} \enspace, \\
        &\U(6i+2) = \set{6i+2}  \enspace,\\
        &\U(6i+4) = \set{6i+2, 6i+4} \cup \set{2j+1\given \pi_j\text{ is a proof of theorem }\varphi_i}_{j\in\N} \enspace,\\
        &\U(2i+1) = \set{2i+1}  \enspace.\\
    \end{align*}
    Note that the question of whether $\U(6i) \cap \U(6i+4)$ is empty is equivalent to whether there exists a proof for theorem $\varphi_i$, and thus is undecidable.
  
    It is straightforward to show (ii), as for every integer $i\in\N$, we can represent $\U(6i+4)$ with the program $P_{6i+4}$, which we have included below.

    \begin{algorithm}[H]
        \label{alg:u-dec-repr}
        \caption{Program $P_{6i+4}$ to decide whether $x\in\U(6i+4)$ }
            \KwIn{$x,i\in \N$} 
            \KwOut{Whether $x\in\U(6i+1)$} 
            \uIf {$x\in\set{6i+2,6i+4}$}{
                output yes
            }
             \uElseIf {$x$ is even}{
                output no
             }
             \Else{
                $j\leftarrow \frac{x-1}{2}$\;
                \uIf{$\pi_j$ proves $\varphi_i$}{
                    output yes
                }
                \Else{
                    output no
                }
             }
    \end{algorithm}

    Now, for all other $k$, deciding whether $x\in\U(k)$ is also easily representable as a program $P_k$ (we can just assume $\U(x)=\set{x}$ if there does not exists $i\in\N$ such that $k\in\set{6i,6i+2,6i+4}$, in order for $\U$ to be well-defined).

    To show (iii),  we follow the notation of \citep{ashtiani2020black} and show that $\VC(\H^\U_{\mar})=1$, which implies proper robust learnability by Fact~\ref{fact:suff-rob-cpac} (also  Fact~\ref{fact:apu20-proper-rob-learning}).
    Recall that the class $\H^\U_{\text{mar}}$ consists of the sets $\mar^\U_h:=\set{x\in\X\given \exists z\in\U(x)\st h(x)\neq h(z)}$, defined for each $h\in\H$.
    Now, fix $a\in\N$, which induces the function $h_a$ and the set $\mar_{h_a}^\U$, which we will denote by $\mar_a$ for simplicity.
    Because of the condition $h(x)\neq h(z)$ in the definition of $\mar_a$ and the property that  $\set{k}\subseteq\U(k)$, the only integers $k\in\N$ that can belong to $\mar_a$ are those for which $\set{k}$ is a proper subset of $\U(k)$, i.e., those for which there exists $i\in\N$ such that $k=6i$ or $k=6i+4$.
    We can see that 
    \begin{align*}
    \mar_a=&\set{6i \given 6i>2a}_{i\in\N}\cup\set{6i+4 \given \varphi_i \text{ is a tautology} \wedge 6i+4>2a}_{i\in\N} \\
    &\cup\set{6i+4 \given \varphi_i \text{ not a tautology} \wedge 6i+2=2a}_{i\in\N}\enspace,    
    \end{align*}
    as these are precisely the instances which incur a robust loss of 1 with respect to $h_a$:
    \begin{itemize}
        \item If $k=6i$ then since $\U(6i) = \set{6i} \cup \set{2j+1}_{j\in\N}$, we have that $k\in\mar_a$ iff $h_a(k)=0$,
        \item If $k=6i+4$ and $\varphi_i$ is a tautology, then since $$\U(6i+4) = \set{6i+2, 6i+4} \cup \set{2j+1\given \pi_j\text{ is a proof of theorem }\varphi_i}_{j\in\N}\enspace,$$ we have that $k\in\mar_a$ iff $h_a(k)=0$ (we cannot have $h_a(6i+4)=1$ while $h_a(6i+2)=0$), 
        \item Else, if  $k=6i+4$ and $\varphi_i$ is not a tautology, $\U(6i+4) = \set{6i+2, 6i+4}$, and $k\in\mar_a$ iff $h(6i+2)\neq h(6i+4)$ iff $6i+2=2a$.
    \end{itemize}
    Now, it suffices to consider the subsets of $X=\set{6i,6i+4}_{i\in\N}$ to find the largest shattered set, as these are the only instances that can belong to some set $\mar_a$.
    We partition $X$ into two parts $P_1$ and $P_2$, where $P_1=\set{6i}_{i\in\N}\cup\set{6i+4\given \varphi_i \text{ is a tautology}}_{i\in\N}$ and $P_2=\set{6i+4\given \varphi_i \text{ is not a tautology}}_{i\in\N}$.
    On the one hand, $\set{\mar_a}_{a\in\N}$ restricted to $P_1$ is simply thresholds on $P_1$, and so at most one element from $P_1$ can be in a shattered set. 
    On the other hand, $\set{\mar_a}_{a\in\N}$ restricted to $P_2$ is the class of singletons, so again at most one element from $P_2$ can be in a shattered set.  
    Thus, we have established that $\VC(\H^\U_\mar)\leq2$.
    To see that $\VC(\H^\U_\mar)=1$ consider arbitrary $k_1\in P_1$ and $k_2=6i+4\in P_2$ for some $i,i'\in\N$. 
    It is easy to see that $\set{k_1,k_2}$ or $\set{k_2}$ won't be in the projection of $\H^\U_\mar$ onto $\set{k_1,k_2}$.
    Indeed, $k_2$ being in a set in the projection implies that there is a unique $\mar_a$ such that $k_2\in\mar_a$, and $k_1$ is either in $\mar_a$ or not.
 
    Finally, to show that $\H$ not (improperly) $\U$-robustly CPAC learnable in the agnostic setting we define a set of distributions for which no computable learner for $\H$ succeeds. For each $i\in \N$ we define a distribution $D_i$ on $\N\times\{0,1\}$ as follows:
    \begin{align*}
        &D_i( (6i,1) ) = 1/2 \enspace, \\
        &D_i( (6i+2,1) ) = 1/6 \enspace,\\
        &D_i( (6i+4,0) ) = 1/3 \enspace.
    \end{align*}

  To determine optimal predictors on these distributions, we distinguish two cases:
    \begin{enumerate}
        \item $\U(6i) \cap \U(6i+4)=\emptyset$ : this implies that $\U(6i+4)=\set{6i+2, 6i+4}$. Since $\U(6i+2)\cap \U(6i+4)\neq \emptyset$ and these points have different labels under distribution $D_i$, no predictor can achieve robust loss $0$ on both points simultaneously. Thus no predictor can achieve robust risk less than $1/6$. Note that $h_{3i,\infty}$ has robust risk $1/6$ on $D_i$, and is therefore an optimal predictor on $D_i$.  Furthermore, note that any optimal predictor $h$ on $D_i$ must satisfy $h(6i+2)=h(6i+4)=0$.
        \item $\U(6i) \cap \U(6i+4)\neq\emptyset$ : in this case, since $6i$ and $6i+4$ have different labels under distribution $D_i$ and intersecting perturbation sets, no predictor can achieve robust loss $0$ on both points and thus no predictor can achieve robust loss less than $1/3$ on $D_i$.
        Note that $h_{\infty,\infty}$ (the constant function 1) has robust risk $1/3$ on $D_i$ and is thus a robust risk minimizer in this case. Furthermore, any optimal predictor $h$ on $D_i$ must satisfy $h(6i) = h(6i+2) = 1$, to not incur additional robust loss.
       
    \end{enumerate}

   We thus established that we have $h(6i+2) = 1$ for an optimal predictor on $D_i$ (and any predictor that has robust loss less than $1/6$ more than the optimal) if and only if $\U(6i) \cap \U(6i+4)\neq\emptyset$ and thus if and only if $\varphi_i$ has a proof.

  We now argue that for any $i\in\N$, a $\U$-robust CPAC learner $\A$  for $\H$, proper or not, can be used to determine whether  $\U(6i) \cap \U(6i+4)=\emptyset$, and thus can be used to decide if $\varphi_i$ is a tautology or not: we let $\A$ be a robust computable learner for $\H$ and let $M\in\N$ be an upper bound for learning $\H$ with $\epsilon = 1/7 < 1/6$ and $\delta = 1/3$. Running $\A$ on all sequences of length $M$ over the set $\set{(6i,1),(6i+2,1),(6i+4,0)}$ and evaluating the resulting predictors has to yield a $1/7$-close to optimal hypothesis on two thirds of the sample. Thus taking the majority vote over the predictions on $(6i+2)$ is a decider for whether $\varphi_i$ is a tautology or not.
\end{proof}
 
\subsection{Algorithm for the Decidable Representation in Theorem~\ref{thm:proper-rob-cpac-imposs-realizable} }
\label{app:dr-algo-proper}

    \begin{algorithm}[H]
        \label{alg:u-dec-repr-turing}
        \caption{Program to decide whether $x_2 \in \U(x_1)$}
            \KwIn{$x_1,x_2\in \N$}
            \KwOut{Whether $x_2 \in \U(x_1)$} 
            \uIf {$x_1$ is odd}{
                \uIf{$x_2=x_1$}{
                    output $x_2\in \U(x_1)$
                }
                \Else {
                    output $x_2 \notin \U(x_1)$
                }
            }
            \Else (\Comment{$x_1$ is even}){
                \uIf {$x_2$ is odd}{
                    run $T_{x_1}(x_1)$ for at most $\frac{x_2-1}2$ steps\\
                    \uIf{$T_{x_1}(x_1)$ halted}{
                        output $x_2 \in \U(x_1)$
                    }
                    \Else (\Comment{$T_{x_1}(x_1)$ is still running after $\frac{x_2-1}2$ steps}){
                        output $x_2 \notin \U(x_1)$
                    }
                }
                \Else (\Comment{$x_2$ is even}){
                    \uIf{$x_2=x_1$}{
                        output $x_2\in \U(x_1)$
                    }
                    \Else{
                        output $x_2\notin \U(x_1)$
                    }
                }
            }
    \end{algorithm}

\subsection{Useful Lemmas for Theorem~\ref{thm:proper-rob-cpac-imposs-realizable}}
\label{app:useful-proper-impossibility}
\begin{lemma}\label{lem:twohalt_is_undecidable}
There is no program that solves the problem $\twohalt$, that is, no program with the following behavior:
given a pair of indices of Turing Machines  $(i,j)$ as input:
\begin{itemize}
    \item if $T_i(i)$ halts and $T_j(j)$ loops, output $1$
    \item if $T_i(i)$ loops and $T_j(j)$ halts, output $2$
    \item if both  $T_i(i)$ and $T_j(j)$ loop, halt and output $1$ or output $2$
    \item if both $T_i(i)$ and $T_j(j)$ halt, output $1$ or output $2$ or loop
\end{itemize}
\end{lemma}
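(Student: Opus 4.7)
I would prove the lemma by contradiction: assume a program $P$ satisfying the $\twohalt$ specification exists, and use Kleene's double recursion theorem to construct a pair of indices $(i,j)$ on which $P$'s own output is incompatible with its own specification. The strategy exploits the fact that the ``both loop'' clause forces $P$ to commit to an answer even when neither diagonal machine halts, while the two ``exactly one halts'' clauses pin the required output uniquely.

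Concretely, using the $s$-$m$-$n$ theorem together with the double recursion theorem, I would obtain indices $i$ and $j$ such that $T_i$ and $T_j$ each ignore their input and behave as follows. $T_i(i)$ runs $P(i,j)$; it loops if the result is $1$, and halts if the result is $2$. Symmetrically, $T_j(j)$ runs $P(i,j)$; it loops if the result is $2$, and halts if the result is $1$. If $P(i,j)$ itself never terminates, then both $T_i(i)$ and $T_j(j)$ are stuck on that subroutine and hence loop. Such an $(i,j)$ exists because the body of each constructed machine depends on $(i,j)$ only through a total computable function, which is exactly the hypothesis needed for the double recursion theorem.

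Now case-split on $P(i,j)$. If $P(i,j)=1$, then by construction $T_i(i)\uparrow$ and $T_j(j)\downarrow$, but the specification for this halting pattern forces the output to be $2$, contradiction. If $P(i,j)=2$, the symmetric contradiction arises. If $P(i,j)$ loops, then both $T_i(i)$ and $T_j(j)$ loop, and the ``both loop'' clause of the specification demands that $P(i,j)$ halt and output $1$ or $2$, again a contradiction. No case survives, so no such $P$ exists.

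The main conceptual ingredient is the self-referential construction of $i$ and $j$; the rest is routine case analysis. The step I expect to require the most care is checking that the permissive ``both halt'' clause, which allows $P$ complete freedom including the option to loop, does not rescue $P$. In our construction, the halting behaviors of $T_i(i)$ and $T_j(j)$ are deliberately anti-correlated: at most one of them ever halts, so the ``both halt'' clause is simply never activated and the three contradictions above are exhaustive.
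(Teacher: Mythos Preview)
Your proposal is correct and follows essentially the same route as the paper: a contradiction via the double recursion theorem (which the paper states and proves as its Lemma~\ref{lem:toscas_two_fold_recursion}), building two self-referential machines whose diagonal behaviors are anti-correlated with the assumed $\twohalt$ solver, followed by the identical three-way case split. The paper additionally hard-wires distinct outputs on input $0$ to force $c_1\neq c_2$, but your observation that the ``both halt'' case is never reached already covers the analysis without this.
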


Before proving Lemma~\ref{lem:twohalt_is_undecidable}, let us state and prove the following result, which will be used in the proof of Lemma~\ref{lem:twohalt_is_undecidable}.

\begin{lemma}\label{lem:toscas_two_fold_recursion}
    For any two 2-place total computable functions $f_1$, $f_2$ there are indices $c_1,c_2$, such that
    $T_{c_1} \equiv T_{f_1(c_1,c_2)}$ and
    $T_{c_2} \equiv T_{f_2(c_1,c_2)}$
\end{lemma}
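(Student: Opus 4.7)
The plan is to derive this as a standard consequence of Kleene's (first) recursion theorem together with the parametrized form of it, which in turn follows from the $s$-$m$-$n$ theorem. The key idea is to ``close the loop'' one variable at a time: first absorb $c_2$ into a total computable function of $c_1$ alone via a uniform application of the recursion theorem, then close $c_1$ with a second application.

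Concretely, I would proceed as follows. First, for each fixed $e \in \naturals$, the map $e' \mapsto f_2(e, e')$ is total computable, so by the $s$-$m$-$n$ theorem there is a total computable function $g$ with $T_{g(e, e')} \equiv T_{f_2(e, e')}$ and, more importantly, a uniform version of Kleene's recursion theorem yields a total computable $\sigma : \naturals \to \naturals$ such that for every $e$,
\begin{equation*}
T_{\sigma(e)} \;\equiv\; T_{f_2(e, \sigma(e))}.
\end{equation*}
This uses only the fact that $f_2$ is total computable and the standard proof of the parametrized recursion theorem via $s$-$m$-$n$.

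Next, define $\tau : \naturals \to \naturals$ by $\tau(e) = f_1(e, \sigma(e))$. Since $f_1$ and $\sigma$ are both total computable, $\tau$ is total computable. By Kleene's (single) recursion theorem applied to $\tau$, there exists an index $c_1$ with
\begin{equation*}
T_{c_1} \;\equiv\; T_{\tau(c_1)} \;=\; T_{f_1(c_1, \sigma(c_1))}.
\end{equation*}
Setting $c_2 := \sigma(c_1)$ then gives $T_{c_1} \equiv T_{f_1(c_1, c_2)}$ directly, and from the defining property of $\sigma$ specialized to $e = c_1$ we obtain $T_{c_2} = T_{\sigma(c_1)} \equiv T_{f_2(c_1, \sigma(c_1))} = T_{f_2(c_1, c_2)}$, which is the second required equivalence.

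The only real subtlety is justifying the parametrized recursion step producing $\sigma$: one must check that the standard fixed-point construction (Kleene's diagonal trick using $s$-$m$-$n$) can be carried out uniformly in the parameter $e$, yielding a \emph{total} computable $\sigma$ rather than a mere existence statement for each $e$ separately. Once that is in hand, the rest of the argument is a short compositional verification. I expect this uniformity claim to be the main obstacle to write out carefully; the remainder is a one-line calculation after applying the single recursion theorem to $\tau$.
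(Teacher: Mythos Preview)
Your proof is correct but follows a different route from the paper's. You iterate the recursion theorem: first invoke the parametrized (uniform) form to obtain a total computable $\sigma$ with $T_{\sigma(e)}\equiv T_{f_2(e,\sigma(e))}$ for every $e$, then apply the ordinary recursion theorem to $e\mapsto f_1(e,\sigma(e))$ to get $c_1$, and set $c_2=\sigma(c_1)$. The paper instead performs a single simultaneous diagonalization directly from the $S^n_m$ theorem: it defines auxiliary machines $T_a,T_b$ with $T_a(x_1,x_2,y_1,y_2)\equiv T_{T_{x_1}(x_1,x_2)}(y_1,y_2)$ and $T_b(x_1,x_2,y_1,y_2)\equiv T_{T_{x_2}(x_1,x_2)}(y_1,y_2)$, extracts total computable $h_1,h_2$ via $S^n_m$, lets $e_1,e_2$ be indices for the compositions $f_k(h_1(\cdot,\cdot),h_2(\cdot,\cdot))$, and takes $c_k=h_k(e_1,e_2)$. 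Your approach is shorter and cleaner if one is willing to quote the parametrized recursion theorem as a black box (the ``only real subtlety'' you flag is indeed the standard uniform Kleene construction and goes through without issue); the paper's version is more self-contained, using only $S^n_m$, and treats the two fixed points symmetrically rather than closing them one at a time.
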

\begin{proof}
    We start by defining the following Turing Machines:
   \[T_a(x_1,x_2,y_1,y_2) = \begin{cases}
       T_{T_{x_1}(x_1,x_2)}(y_1,y_2) & \text{ if } T_{x_1}(x_1,x_2) \text{ halts }\\
       \uparrow & \text{otherwise.}
   \end{cases}\]
and 
     \[T_{{b}}(x_1,x_2,y_1,y_2) = \begin{cases}
       T_{T_{x_2}(x_1,x_2)}(y_1,y_2) & \text{ if } T_{x_2}(x_1,x_2) \text{ halts }\\
       \uparrow & \text{otherwise.}
   \end{cases}\]

    By the $S^n_m$-Theorem, there exist total computable functions $h_1$ and $h_2$, such that $T_a(x_1,x_2,y_1,y_2) = T_{h_1(x_1,x_2)}(y_1,y_2)$  and $T_b(x_1,x_2,y_1,y_2) = T_{h_2(x_1,x_2)}(y_1,y_2)$ hold for all $x_1,x_2,y_1,y_2$.
    Now let $e_1$ be the index of the Turing machine that computes the function 
    $$g_1(x_1,x_2)= f_1(h_1(x_1,x_2),h_2(x_1,x_2))\enspace,$$
    and $e_2$ be the index of the Turing machine that computes the function 
    $$g_2(x_1,x_2)= f_2(h_1(x_1,x_2),h_2(x_1,x_2))\enspace.$$
    Since $g_1$ is a total computable function, we have $T_{h_1(e_1,e_2)} {\equiv} T_{T_{e_1}(e_1,e_2)}$.  
    By definition, we have that $T_{e_1}(e_1,e_2)$ computes $f_1(h_1(e_1,e_2),h_2(e_1,e_2))$. 
    Thus $T_{h_1(e_1,e_2)} {\equiv} T_{f_1(h_1(e_1,e_2),h_2(e_1,e_2))}$. 
    Furthermore, by the same argument, we have that $T_{h_2(e_1,e_2)} {\equiv} T_{T_{e_2}(e_1,e_2)} {\equiv} T_{f_2(h_1(e_1,e_2),h_2(e_1,e_2))}$. 
    Thus if we choose $c_1 = h_1(e_1,e_2)$ and $c_2= h_2(e_1,e_2)$, we get 
    {$T_{c_1} \equiv T_{f_1(c_1,c_2)}$ and $T_{c_2}\equiv T_{f_2(c_1,c_2)}$}, as required.
\end{proof}

{We are now ready to prove Lemma~\ref{lem:twohalt_is_undecidable}, which completes the result of this section.}
\begin{proof}[of Lemma~\ref{lem:twohalt_is_undecidable}]
By way of contradiction, let's assume that there existed a program/Turing Machine $T^{\twohalt}$ that solved the $\twohalt$ problem. Recall that we had fixed an enumeration $(T_i)_{i\in\N}$ of all Turing Machines (or programs), and that we use the notaion $T(i)\downarrow$ to indicate that a Turing Machine $T$ halts on input $i$ and $T(i)\uparrow$ to indicate that $T$ loops on input $i$.
 We can now define two natural numbers $l_1$ and $l_2$ as being the indices of Turing Machines $T_{l_1}$ and $T_{l_2}$ which have the following behavior:

\[T_{l_1}(i,j, z) =\begin{cases}
    1 & \text{ if } z = 0 \\
    \uparrow &\text{ if } z=i >0 \text{ and } T^{\twohalt}(i,j)=1\\
    1  &\text{ if } z=j >0\text{ and } T^{\twohalt}(i,j)=1\\
    \uparrow &\text{ if } z=j>0 \text{ and } T^{\twohalt}(i,j)=2\\
    2  &\text{ if } z=i >0\text{ and } T^{\twohalt}(i,j)=2\\
   \uparrow & \text{ otherwise }
\end{cases}\]

\[T_{l_2}(i,j, z) =\begin{cases}
    2 &\text{if } z=0\\
    \uparrow &\text{ if } z=i>0 \text{ and } T^{\twohalt}(i,j)=1\\
    1  &\text{ if } z=j >0 \text{ and } T^{\twohalt}(i,j)=1\\
    \uparrow &\text{ if } z=j >0\text{ and } T^{\twohalt}(i,j)=2\\
    2  &\text{ if } z=i>0 \text{ and } T^{\twohalt}(i,j)=2\\
    \uparrow & \text{ otherwise }
\end{cases}\]

Now by $S^n_m$-Theorem, there are total computable functions $f_1$ and $f_2$ such that
\[T_{f_1(i,j)}(z) =\begin{cases}
    1 &\text{if } z=0\\
    \uparrow &\text{ if } z=i>0 \text{ and } T^{\twohalt}(i,j)=1\\
    1  &\text{ if } z=j>0 \text{ and } T^{\twohalt}(i,j)=1\\
    \uparrow &\text{ if } z=j>0 \text{ and } T^{\twohalt}(i,j)=2\\
    2  &\text{ if } z=i>0 \text{ and } T^{\twohalt}(i,j)=2\\
    \uparrow & \text{ otherwise }
\end{cases}\]
and
\[T_{f_2(i,j)}(z) =\begin{cases}
   2 &\text{if } z=0\\
    \uparrow &\text{ if } z=i \text{ and } T^{\twohalt}(i,j)=1\\
    1  &\text{ if } z=j \text{ and } T^{\twohalt}(i,j)=1\\
   \uparrow &\text{ if } z=j \text{ and } T^{\twohalt}(i,j)=2\\
   2  &\text{ if } z=i \text{ and } T^{\twohalt}(i,j)=2\\
    \uparrow & \text{ otherwise }
\end{cases}\]

Now, by Lemma \ref{lem:toscas_two_fold_recursion}, there are $c_1,c_2 > 0$, such that $T_{f_1(c_1,c_2)} \equiv T_{c_1}$ and $T_{f_2(c_1,c_2)}  \equiv T_{c_2}$. Recall that we use $T \equiv S$ to indicate that two Turing Machines have the same behavior as functions.

Now let us look at $T^{\twohalt}({c_1},{c_2})$. There are three options:
\begin{itemize}
    \item $T^{\twohalt}({c_1},{c_2}) = 1$. Then by definition $T_{c_1}(c_1) = T_{f_1(c_1,c_2)}(c_1) \uparrow$ and $T_{c_2}(c_2) = T_{f_2(c_1,c_2)}(c_2) \downarrow$. Thus $T^{\twohalt}$ would output $1$, the wrong answer in a case where $2$, the correct answer, was required.
    \item $T^{\twohalt}({c_1},{c_2}) = 2$. Then by definition $T_{c_1}(c_1) = T_{f_1(c_1,c_2)}(c_1) \downarrow$ and $T_{c_2}(c_2) = T_{f_2(c_1,c_2)}(c_2) \uparrow$. Thus $T^{\twohalt}$ would output $2$, the wrong answer in a case where $1$, the correct answer, was required.
    \item $T^{\twohalt}({c_1},{c_2}) \uparrow$. Note that, by definition, the Turing Machines $T_{f_1(c_1,c_2)}$ and $T_{f_2(c_1,c_2)}$ only halt and produce an output if either their input was $0$ or the call to $T^{\twohalt}$ inside their definition halts and produces an output. Thus, for this case,  we get $T_{c_1}(c_1) =  T_{f_1(c_1,c_2)}(c_1) \uparrow$ and $T_{c_2}(c_2) = T_{f_2(c_1,c_2)}(c_2) \uparrow$.  However, if both $T_{c_1}(c_1)$ and $T_{c_2}(c_2)$ don't halt, $T^{\twohalt}$ was supposed to halt. Thus $T^{\twohalt}$ also has the wrong behaviour in this case. 
\end{itemize}
In summary, the assumption that $T^{\twohalt}$ existed lead to a contradiction in all cases and thus the problem $\twohalt$ does not admit a solution. 
\end{proof}

\section{Proofs from Section~\ref{sec:oracle}}
\label{app:proofs-oracle}

\subsection{Proof of Theorem~\ref{lemma:sep-cpac-rob-loss-uncomputable}}
\label{app:proof-sep-cpac}

\begin{proof}
    Let the instance space be $\mathcal{X}= (\naturals\times (\{0\}\cup\naturals)$. 
    Let $(\varphi_i)_{i\in \naturals}$ be an enumeration of formulas and $(\pi_j)_{j\in \naturals}$ be an enumeration of proofs. 
    Let $(i,0)$, represent the $i$-th formula and for any $i, j$ let $(i,j)$ represent the $j$-th proof (independently of $i$). 
    Let $\mathcal{H}=\{h_{a,b}\}_{a,b\in \naturals \cup \{0\}}$, where
    \[h_{a,b}(i,j) = 
        \begin{cases}
            1 &\text{ if } i < a \text{ or }(i = a \text{ and } j\leq b)\\
            0 &\text{ otherwise }
        \end{cases}
        \enspace.
    \]
    It is clear that $\VC(\H)=1$ and that ERM is computably implementable, so $\H$ is properly CPAC learnable.
    
    We define the perturbation region as follows:
    \begin{align*}
        &\mathcal{U}(i,0) =\{(i,0)\}\cup \{(i,j): \pi_j \text{ proves } \varphi_i \}\enspace,\\
        &\mathcal{U}(i,j) = \{(i,0), (i,j)\} \enspace,
    \end{align*}
    which is clearly DR, as to check whether $(i,j)\in\U((i,0))$ we can check whether $\pi_j$ proves $\varphi_i$.

    It is immediate to see that $\ell^\U$ is not computable for all hypotheses in $\H$.
    Indeed, for every $i\in\N$, deciding whether for the formula $\varphi_i$ is a tautology reduces to deciding whether
    \[\ell^{\mathcal{U}}(h_{i,0}, (i,0), 1) = 0\enspace.\]

    Now, it remains to show that $\H$ is properly $\U$-robustly CPAC learnable. 
    To this end, we will first show that $\VC(\H_\mar^\U)$ is finite, and second, show that robust ERM is computable in the agnostic setting.
    By Fact~\ref{fact:suff-rob-cpac}, we will have robust CPAC learnability.
    
    To show $\VC(\H_\mar^\U)<\infty$, we consider the sets $\mar^\U_{h_{a,b}}$, which we will denote by $\mar_{a,b}$ for brevity.
    First note that either $a=\infty$ or $b=\infty$ implies that $\mar_{a,b}=\emptyset$.
    Let $a,b\in\N$ and observe that only instances of the form $(a,j)$ can belong to $\mar_{a,b}$, and thus a shattered set cannot contain both $(a,j)$ and $(a',j')$ where $a\neq a'$.
    We distinguish two cases:
    \begin{itemize}
        \item $\varphi_a$ is not a tautology: then $\mar_{a,b}=\{(a,0)\}$,
        \item $\varphi_a$ is a tautology: then $\mar_{a,b}=\{(a,0)\}\cup\{(a,j)\given j>b \wedge \pi_j \text{ proves } \varphi_a\}$.
    \end{itemize}
    From this, it is clear that $\VC(\H_\mar^\U)=1$.

    Now, to show that robust ERM is computable, 
    let $S\subseteq\X\times\{0,1\}$ be an arbitrary labelled sample of size $m$.
    For a fixed $i\in\N$ and $y\in\{0,1\}$, consider the subset $S_i^y$ of $S$, consisting of ``$i$-instances'' with label $y$, i.e., $S_i^y:=\set{((i,j),y)\given ((i,j),y)\in S}$, and let $S_i=S_i^0\cup S_i^1$.
    We can see that $h_{i,\infty}$ incurs robust loss 1 on all the instances in $S_i^0$ and robust loss 0 on all the instances in $S_i^1$.
    Furthermore, while every $h_{i, b}$ incurs robust loss 1 on all $S_i^0$, it might also incur loss 1 on some instances $S_i^1$.
    Finally, for $b\in\N$, on every instance in $S\setminus S_i$, the hypotheses $h_{i,b}$ and $h_{i,\infty}$ incur the same robust loss. 
    Thus $h_{i,\infty}$ always ``dominates'' $h_{i,b}$ in the sense that $\R_\U(h_{i,\infty};S)\leq \R_\U(h_{i,b};S)$. 
    We furthermore note that for every $i$, the loss of the function $h_{i,\infty}$ can be evaluated on every point. 
    Thus, performing RERM boils down to comparing all candidate hypotheses of the form $h_{i,\infty}$, which can be done computably.
\end{proof}

\begin{remark}
    In the above proof, {let $\H'\subset \H$ be the set of all hypotheses in $\H$ with $b=\infty$, i.e., $\H'=\set{h_{a,b}\in\H\given b=\infty}$.
    Then it is clear that, in the robust case, because of the definition of $\U$, we can get away with only outputting hypotheses from $\H'$, no matter what the underlying distribution is, and that for all hypotheses in $\H'$ the robust loss is computable.}
    In general, {for some hypothesis class $\H$, let $\H'$ be a class, such that (i) for every $h\in \H$, there is $h'\in \H'$, such that for all $(x,y)\in \X\times\Y$: $\ell^{\U}(h',x,y)\leq \ell^{\U}(h,x,y) $, (ii) the robust loss of every $h'\in \H'$ is computably evaluable. Then if $\H'$ is realizable/agnostic robustly CPAC learnable, so is $\H$. Furthermore, if $\H'\subset \H$ and $\H'$ is \emph{proper} realizable/agnostic robustly learnable, so is $\H$. }
\end{remark}

\subsection{Proof of Theorem~\ref{thm:rob-loss-comp-but-no-proper-cpac-agnostic}}
\label{app:proof-rob-loss-comp-not-rcpac}

\begin{proof}
Let the instance space be $\X= \naturals \times \naturals$.
Let the hypothesis class be $\H = \{h_{i,j}:i,j \in \naturals \}$, where 
\[h_{i,j}(x) = \begin{cases}
    1 & \text{ if } x=(i,k) \text{ with } k \leq j \\
    0 & \text{ otherwise }
\end{cases}\enspace,\]
i.e., $\H$ is the class which defines initial segments on the sets $\{(i,j)\}_{j\in\N}$ for all $i\in\N$.

We define the perturbation types as follows.
\begin{align*}
    \U((i,0)) &= \{(i,0)\} ~\cup~  \{(i,k): T_i \text{ does not halt after } k \text{ steps on the empty input }\}\enspace,\\
    \U((i,j)) &= 
    \begin{cases} 
        \{(i,j), (i,0) \} & T_i \text{ does not halt after } j  \text{ steps on the empty input} \\ \{(i,j)\} & \text{ otherwise. } 
    \end{cases}
    \enspace,
\end{align*}

We now prove properties (i)-(iii) from the theorem statement.

To prove (i), i.e., that  $\H$ is properly CPAC learnable, it suffices to observe that $\VC(\H)=1$ and that ERM is easily implementable (in both the agnostic and the realizable cases).
In order to verify $\VC(\H)=1$, let us take to arbitrary distinct points $(i_1,j_1)$ and $(i_2,j_2)$. If $i_1 \neq i_2$, then the labelling $((i_1,j_1),1),((i_2,j_2),1)$ cannot be achieved by $\H$. If $i_1=i_2$, then $j_1 \neq j_2$. Let us assume without loss of generality that $j_1 < j_2$. Then the labelling $((i_1,j_1),0),((i_2,j_2),1)$ cannot be achieved. For computably realizing ERM, we note that for any sample $S$ it is sufficient to do search over the finite class $\H_S=\{h_{i,j}: ((i,j),1)\in S\} \cup \{ h_{0}^S\}$, where $h_0^S = h_{i,0}$, with $i$ being the smallest index for which there is no $((i,j),1)\in S$. Searching over these finitely many candidate hypotheses can be done computably. 

To prove (ii), i.e., that $\H$ is properly $\U$-robustly learnable, we show that the VC dimension $\VC(\H_\mar^\U)$ of the margin class is finite.
To this end, we identify the sets $\mar^\U_\H$ and distinguish two cases:
\begin{itemize}
    \item $T_i$ does not halt after $j$ steps on the empty input: then $$\mar_{h_{i,j}}^{\U} = \{(i,0)\}\cup \{ (i,k): k > j \text{ and } T_i \text{ does not halt after }k \text{ steps}\} \enspace,$$ 
    \item $T_i$ halts on the empty input after $j$ steps:  then $\mar_{h_{i,j}}^{\U}= \emptyset $.
\end{itemize}
We now argue that $\VC(\H_\mar^\U)=1$.
To this end, let $x_1=(i_1,j_1)$ and $x_2=(i_2,j_2)$ be two distinct domain points, and note that, to be shattered, we need $i_1=i_2$, as otherwise there is no $\mar_h^\U\in \H_{\mar}^{\U}$ with $x_1, x_2\in \mar_h^\U$.
Now, without loss of generality, let $j_1<j_2$.
There are three cases:
\begin{itemize}
        \item If  $T_{i_1}$ halts after $j_2$ steps or less, then there is no $\mar_h^\U\in \H_{\mar}^{\U}$ with $x_2\in \mar_h^\U$.
        \item If $T_{i_1}$ does not halt after $j_2$ and $j_1\neq 0$, then there is no $\mar_h^\U\in \H_{\mar}^{\U}$ with $x_1\in \mar_h^\U$ and $x_2 \notin \mar_h^\U$. 
        \item If $T_{i_1}$ does not halt after $j_2$ steps and $j_1= 0$, then there is no $\mar_h^\U\in \H_{\mar}^{\U}$ with $x_2\in \mar_h^\U$ and $x_1 \notin \mar_h^\U$.
\end{itemize}
To prove (iii), i.e., that the pointwise robust loss is computably evaluable, let $h_{i,j}\in \H$, $x\in \X$, $y \in \mathcal{Y}$ be arbitrary, and consider the following procedure:
\begin{itemize}
    \item If $x=(i,0)$ and $y=1$, we have that $\ell^{\U}(h_{i,j},x,y) = 1$ if and only if  $(i,j+1) \in \U((i,0))$, which can be determined by running $T_i$ for $j+1$ steps and checking whether it halts. 
    \item If $x=(i,0)$ and $y=0$, then $\ell^{\U}(h_{i,j},x,y) = 1$.
    \item If $x= (i,k)$ with $k>0$ and $y=1$, check whether $k \leq j$. 
    If so, $\ell^{\U}(h_{i,j},x,y) = 0$, otherwise $\ell^{\U}(h_{i,j},x,y) = 1$.
    \item If $x= (i,k)$ with $k>0$ and $y=0$, check whether $k \leq j$. 
    If so, $\ell^{\U}(h_{i,j},x,y) = 1$. Otherwise $\ell^{\U}(h_{i,j},x,y) = 0$ if and only if $(i,0)\notin\U((i,j))$, which can be checked by running $T_i$ for $k$ many steps and checking whether it halts.
    \item If $x= (i',k)$ with $i'\neq i$, then $\ell^{\U}(h_{i,j},x,y) = y$.
\end{itemize}
Finally, we show that there is no strong realizable $\rerm_\H^\U$-oracle and that $\H$ is not properly agnostically $\U$-robustly CPAC learnable. We first show the non-existence of a computable strong realizable $\rerm_\H^\U$-oracle or agnostic $\rerm_\H^\U$-oracle 
We prove this with a reduction from the Halting problem: for all $i\in\N$, a strongly realizable $\rerm_\H^\U$ oracle (or agnostic $\rerm_\H^\U$-oracle) ran on the single labelled instance $((i,0),1)$ outputs a hypothesis $h$ with robust loss $\ell^{\U}(h,(i,0),1)=0$ if and only if the Turing Machine $T_i$ halts on the empty input. As a side note, we observe that a weak realizable $\rerm_\H^\U$-oracle does exist for this construction.

We show that $\H$ is not agnostically properly $\U$-robustly CPAC learnable in a similar way. Assume there was an agnostically properly $\U$-robustly CPAC learner $\A$ with sample complexity function $m$.  For all $i\in\N$, define $D_i$ as $D_i((i,0),1)=1$. Let $m=m(1/8,1/8)$. By the learning guarantee of $\A$, we have that $\A(S)=h \in \H$ with $ \mathcal{L}_{D_i}(h) \leq \underset{h\in\H}{\inf}\R(h;D_i) ~+~  \frac{1}{8} $ with probablility $7/8$ over $S\sim D_i^m$. Observe that $\underset{h\in\H}{\inf}\R(h;D_i) = 0$ if and only if $T_i$ halts on the empty input, and otherwise $\underset{h\in\H}{\inf}\R(h;D_i) = 1$. We note that $S'= (((i,0),1), \dots, ((i,0),1) )$, where $|S'|= m$ is the only possible sample drawn from $D_i$. Thus we have $\A(S')= h' \in \H$ with $ \mathcal{L}_{D_i}(h') \leq 1/8$ if and only if $T_i$ halts on the empty input. Furthermore, $\mathcal{L}_{D_i}(h') = \ell^{\U}(h',(i,0),1) \in \{0,1\} $ for all $h'\in\H$. Thus we have  $\mathcal{L}_{D_i}(\A(S'),(i,0), 1) = \ell^{\U}(\A(S'),(i,0), 1) = 0$ if and only if  $T_i$ halts on the empty input. As established above, the robust loss $\ell^{\U}$ can be computably evaluated on any $h\in\H$. Thus a proper agnostic learner $\A$ yields a computable procedure for solving the Halting problem, which is a contradiction to the computable undedicability of the Halting problem. Thus $\H$ is not agnostically properly $\U$-robustly CPAC learnable.
\end{proof}

\subsection{Proof of Proposition~\ref{prop:proper->rerm}}
\label{app:proof-prop:proper->rerm}

\begin{proof}
    Let $S=((x_1,y_1),\dots, (x_k,y_k))$ be an input sample for $\rerm^\U_\H$, and let $\A$ be the robust SCPAC learning algorithm with computable sample complexity function $m_{\H,\U}$.
    Consider the uniform distribution $D_S$ over $S$. 
    Let $m\geq m_{\H,\U}(\frac{1}{k+1}, 1/7)$, {which is computable by the SCPAC guarantee}.
    There are exactly $k^m$ different possible samples $S_1,\dots S_{k^m}$ of size $m$ {that can be drawn from} $D_S$, all of which being equally likely. 
    Using the proper learner $\A$ on all samples $S_i$, we can generate the class $\hat{\H}= \{\A(S_i): i \in [k^m]\}$. 
    By the robust learning guarantee, we know that at least 7/8 of the hypotheses in $\hat{\H}$ are successful hypotheses, i.e., hypotheses within robust risk $1/(k+1)$ of the optimal one. 
    Furthermore, since $\epsilon = \frac{1}{k+1}$, any successful hypothesis must have optimal loss on $S$ (as any error on a sample point would incur loss $\frac{1}{k}$).
    It now suffices to use $\ell^\U$ on $\hat{\H}$ and return any one of the hypotheses with minimal empirical robust risk on $S$.
    {Note that, for the robust realizable setting (the second case in the theorem statement), we only get a weak realizable oracle $\rerm_\H^\U$, as if the sample is not robustly realizable, we don't have any guarantees on $\A$'s behaviour.}
\end{proof}

\section{Proofs from Section~\ref{sec:c-dim-u}}

\subsection{Proof of Lemma~\ref{lemma:rob-nflt}}
\label{app:nflt}

\begin{proof}
The idea is similar to the standard No-Free-Lunch Theorem. 
We let $m$ and $\zdef[2m]$ as in the theorem statement, meaning that for all $i$, $\U(\zin)\cap\U(\zip)\neq \emptyset$ and for all $i\neq j$ the regions $\U(\zip)$ and $\U(\zjn)$ are disjoint. 
We now look at a robust labelling $y\in\{0,1\}^{2m}$.
There are $T=2^{2m}$ such labellings, and for each $j\in[T]$ associated with labelling $y^{(j)}\in\{0,1\}^{2m}$, there exists a function $f_j$ such that for all $i\in[2m]$, for all $z'\in\U(z_i^{y_i^{(j)}})$ we have that $f_j(z')=y_i^{(j)}$.
Namely, for $i\in[2m]$, the label $y_i^{(j)}$ dictates which of $\zin$ or $\zip$ will have its perturbation region constantly (and thus robustly) labelled.
For each function $f_j$ and its robust labelling $y^{(j)}$, we define the distribution $D_j$ as follows:
\begin{equation*}
    D_j((z,y))=
    \begin{cases}
        1/2m    &\text{if $y=y_i^{(j)}$ and $z=z_i^y$ for some $i\in[2m]$}\\
        0       &\text{otherwise}
    \end{cases}
    \enspace.
\end{equation*}
It is easy to check that $\R_\U(f_j;D_j)=0$. 

Let $X=\{x_i\}_{i=1}^m$ be some $\U$-robustly shatterable set corresponding  to the set of pairs $\zdef[2m]$, that is $x_i\in \U(\zin)\cap\U(\zip)$ for each $i\in[2m]$. Now note any hypothesis $h:\X\to\{0,1\}$ (that the learner may output) has to assign labels to the the points in the shatterable set $X$ and $h(x_i) = 1$ implies $\ell^{\U}(h, \zin) = 1$ while $h(x_i) = 0$ implies $\ell^{\U}(h, \zip) = 1$. Thus each hypothesis incurs robust loss on at least one of the members $\{\zin, \zip\}$ of each pair, and such a member can be deduced from observing $h(x_i)$. Now the remaining argument to show that there exist $f_j,D_j$ such that the conditions of the theorem statement are met is exactly as in the proof of the standard No-Free-Lunch theorem (see, e.g., Theorem~5.1 in \cite{shalev2014understanding} for details). 
We note that the argument there implies that the high expected loss of the learner's output is witnessed by its behaviours on the shatterable set $X$. Namely, there exists $f_j, D_j$ such that
  \begin{align*}
        \eval{S\sim D_j^m}{\R_\U(\A(S);D_j)}
        =\frac{1}{k}\sum_{l=1}^k \R_\U(\A(S^j_l);D_j)
        & =\frac{1}{k}\sum_{l=1}^k \frac{1}{2m}\sum_{i=1}^{2m}\ell^\U\left(\A(S^j_l),z_i^{(j)},y_i^{(j)}\right)
        \enspace,\\
        & \geq \frac{1}{k}\sum_{l=1}^k \frac{1}{2m}\sum_{i=1}^{2m}\indct{\A(S^j_l)(x_i) \neq y_i^{(j)}} \geq 1/8
    \end{align*}
where $k=(2m)^{m}$ is the number of possible sequences of length $m$  drawn from $\supp(D_j):=\set{(z_i^{(j)},y_i^{(j)})}_{i=1}^{2m}$, and where we have denoted by $S^j_l$ the $l$-th such sequence.
\end{proof}

\subsection{Proof of Theorem~\ref{thm:nec-rob-cpac}}
\label{app:proof-nec-rob-cpac}
\begin{proof}
    First note that, if the perturbation type $\U$ is such that there is an upper bound $M$ on the maximal number of pairs $\zdef[2m]$ admitting a robustly shatterable set, then the computable robust shattering dimension is vacuously finite for any hypothesis class $\H$. 
    Otherwise, let $\A$ be a $\U$-robust CPAC learner for $\H$ with sample complexity function $m(\epsilon,\delta)$.
    Then, fixing $\epsilon=1/8$ and $\delta=1/7$, we have that for all distributions on $\X\times\{0,1\}$, for any sample of size $m:=m(1/8,1/7)$
    \begin{equation}
    \label{eqn:rob-cpac-guarantee}
        \prob{S\sim D^m}{\R_\U(\A(S);D)\geq\underset{h\in\H}{\min}\; \R_\U(h;D)+1/8}< 1/7\enspace.
    \end{equation}
    
    Now, let $\zdef[2m]$ admit a robustly shatterable set $X$, and note that, as explained in the proof of Lemma~\ref{lemma:c-rob-nflt}, we can computably find such a set $X$. 
    Note that $\H$ and $\U$ satisfying the conditions of the theorem statement imply, by Lemma~\ref{lemma:c-rob-nflt}, that we can find a distribution $D$ on $Z\times\{0,1\}$ such that 
    \begin{equation}
    \label{eqn:c-rob-nflt-guarantee}
        \prob{S\sim D^m}{\R_\U(\A(S);D)\geq 1/8}\geq 1/7\enspace,
    \end{equation}
    and whose support is on some $\set{z_i^{y_i}}_{i=1}^{2m}$ for $y\in\{0,1\}^{2m}$.
    This support induces a labelling $y$ on $X$ that is not robustly achievable with respect to $Z$ by any $h\in\H$, as we would otherwise have $\underset{h\in\H}{\min}\;L_D^\U(h)=0$, implying that Equation~\ref{eqn:c-rob-nflt-guarantee} and $\prob{S\sim D^m}{\R_\U(\A(S);D)\geq 1/8}< 1/7$ must hold simultaneously, a contradiction.
\end{proof}

\section{Remarks on the robust shatterability in the impossibility results from Section~\ref{sec:relating-cpac-rob-cpac}}
\label{app:comment-impossibility}
We now comment on the robust shatterability in the impossibility results from Section~\ref{sec:relating-cpac-rob-cpac}.
On the one hand, the perturbation type in Theorem~\ref{thm:proper-rob-cpac-imposs-realizable} does not allow for sets of arbitrary size to admit a robustly shatterable set.
Indeed, any sequence of halting Turing machines (which are associated with the only instances in $\N$ that have a strict inequality $k\subset\U(k)$) will have an ordering such that their perturbation regions form an ascending chain, which makes the condition $i\neq j \implies \U(\zip)\cap\U(\zjn)=\emptyset$ unsatisfiable for any candidate set $\zdef[2m]$. 
On the other hand, the perturbation type in Theorem~\ref{thm:rob-cpac-imposs-agnostic} satisfies the robust shatterability requirement. 
Indeed, we will add the mild assumption that each proof proves at most one formula.
This implies that for each $i\neq i'$, $\U(6i+4)$ and $\U(6i'+4)$ are disjoint, thus, letting $T=(\varphi_{i_k})_{k\in \N}$ be an enumeration of the tautologies, the set $Z=\{(6i_k+2,6i_k+4)\}_{k\in\N}$ admits the robustly shatterable set $X=\{6i_k+2\}_{k\in\N}$, while the sets $\U(6i+4)$ may be infinite.

\end{document}